\newtheorem{prop}{Proposition}
\journal{Mechanical Systems and Signal Processing}
\begin{document}

\begin{frontmatter}



\title{Structured Kolmogorov-Arnold Neural ODEs for Interpretable Learning and Symbolic Discovery of Nonlinear Dynamics}


\author[1,3]{Wei Liu}
\author[2,3]{Kiran Bacsa}
\author[1]{Loon Ching Tang}
\author[2,3]{Eleni Chatzi\corref{cor1}}
\cortext[cor1]{Corresponding author at: Department of Civil, Environmental and Geomatic Engineering, ETH Zürich, Zürich, Switzerland}
\ead{chatzi@ibk.baug.ethz.ch}

\affiliation[1]{organization={Department of Industrial Systems Engineering and Management, National University of Singapore},country={Singapore}}
\affiliation[2]{organization={Department of Civil, Environmental and Geomatic Engineering, ETH Zürich},city={Zürich},country={Switzerland}}
\affiliation[3]{organization={Future Resilient Systems, Singapore-ETH Centre},country={Singapore}}

\begin{abstract}
Understanding and modeling nonlinear dynamical systems is a fundamental challenge across science and engineering. Deep learning has shown remarkable potential for capturing complex system behavior, yet achieving models that are both accurate and physically interpretable remains difficult. To address this, we propose \textit{Structured Kolmogorov-Arnold Neural ODEs (SKANODEs)}, a framework that integrates structured state-space modeling with Kolmogorov-Arnold Networks (KANs). Within a Neural ODE architecture, SKANODE employs a fully trainable KAN as a universal function approximator to perform virtual sensing, recovering latent states that correspond to interpretable physical quantities such as displacements and velocities. Leveraging KAN's symbolic regression capability, SKANODE then extracts compact, interpretable expressions for the system's governing dynamics. Experiments on two canonical nonlinear oscillators and a real-world F-16 ground vibration dataset demonstrate that SKANODE reliably recovers physically meaningful latent displacement and velocity trajectories from acceleration measurements, identifies the correct governing nonlinearities—including the cubic stiffness in the Duffing oscillator and the nonlinear damping structure in the Van der Pol oscillator—and reveals hysteretic signatures in the F-16 interface dynamics through structured latent phase portraits and an interpretable symbolic model. Across all three cases, SKANODE provides more accurate and robust predictions than black-box NODE baselines and classical ARX and NARX identification, while producing equation-level descriptions of the learned nonlinear dynamics.
\end{abstract}

\begin{keyword}
Physics-encoded deep learning \sep Kolmogorov-Arnold Network \sep symbolic equation discovery \sep differential equations \sep Neural ODE \sep inductive bias \sep structured representation \sep nonlinear dynamics
\end{keyword}

\end{frontmatter}



\section{Introduction}
\label{sec1}

The rise of deep learning has significantly advanced the modeling and understanding of complex nonlinear dynamical systems, spanning structural mechanics, fluid dynamics, climate modeling, and biological systems \cite{wang2023scientific,vlachas2022multiscale,hamzi2021learning,li2022deep}. Among the dominant architectures, recurrent neural networks (RNNs) prove notable success in handling sequential data and high-dimensional time series \cite{sherstinsky2020fundamentals,vlachas2024learning}. However, their discrete-time formulation makes them less suitable for capturing the inherently continuous evolution of many real-world systems, especially those governed by differential equations or with irregular temporal sampling \cite{lee2023learning}.

To overcome these limitations, Neural Ordinary Differential Equations (NODEs) have emerged as a principled framework that integrates deep learning with differential equation modeling \cite{chen2018neural}. NODEs offer a flexible, continuous-time latent representation that is more aligned with the physics of real systems. They have been successfully applied across diverse application domains. Yet, despite their expressiveness, NODEs often function as black-box models in that, while they successfully learn efficient latent representations, these do not reveal the underlying (physical) governing mechanisms \cite{norcliffe2020second,fronk2023interpretable,raissi2019physics}. This opacity presents a critical barrier in scientific and engineering disciplines, where trust and understanding of the model are just as important as predictive performance.

To promote interpretability and physical consistency, recent efforts have explored symbolic equation discovery—learning explicit mathematical expressions from data that describe a system’s underlying dynamics \cite{schmidt2009distilling}. A prominent method in this respect is the Sparse Identification of Nonlinear Dynamics (SINDy) framework \cite{brunton2016discovering}, which assumes access to all physical states and their derivatives, and identifies governing equations by performing sparse regression over a predefined function library. SINDy and its extensions \cite{rudy2017data,schaeffer2017learning} have been successfully applied in low-dimensional and noise-free settings. However, they often fail in practical scenarios where system states are only partially observed or measured indirectly and where derivative estimation is highly sensitive to noise.

Other approaches, such as Physics-Informed Neural Networks (PINNs) \cite{raissi2019physics} and PDE-Net \cite{long2019pde}, embed physical knowledge into the training loss to learn governing equations, but typically require known equation structures, spatial derivatives, or collocation points. Genetic programming techniques (e.g., Eureqa \cite{schmidt2009distilling}) and neural-symbolic hybrids (e.g., \cite{chen2022symbolic,sun2023symbolic,kim2020integration}) attempt to evolve symbolic expressions through heuristic search, but often suffer from scalability issues, non-differentiable optimization steps, or dependency on prior knowledge about the functional form.

Most of these existing methods are built on the assumption of direct access to physical coordinates, clean measurements, and a priori specification of basis functions or symbolic templates. As such, their applicability is severely limited in real-world systems where observations are noisy and indirect. This is particularly true in engineering applications, where sensors provide accelerations or other high-order measurements \cite{fritzen2005vibration,hamzi2023learning}. More importantly, most symbolic discovery pipelines are separate from the modeling framework—they operate as post-hoc approximators, rather than as end-to-end trainable components within a system identification architecture.

To overcome these limitations, {\color{black} a novel framework, termed \textit{Structured Kolmogorov-Arnold Neural ODEs (SKANODEs)}, is proposed, which} unifies continuous-time neural modeling, structured physical inductive biases, and \textbf{end-to-end symbolic discovery of governing equations} within a single differentiable pipeline. At the core of SKANODE is the Kolmogorov-Arnold Network (KAN) \cite{liu2024kan}, a recent neural architecture that supports smooth transitions between black-box function approximation and symbolic representation learning. KAN employs adaptive, spline-based basis functions that are differentiable and expressive, while retaining the structure necessary for symbolic interpretability.

In SKANODE, KAN is first used as a \textit{universal function approximator} to learn latent state dynamics that align with physically meaningful quantities such as displacement and velocity. Importantly, these physical states are not assumed to be directly observed. Instead, the model is trained solely on indirect sensor measurements, such as accelerations, by embedding a structured state-space model and observation model that enforce physically consistent relationships between observables and latent coordinates. This enables SKANODE to perform \textit{virtual sensing}, inferring unmeasured physical quantities directly from indirect measurements. Once physically meaningful latent dynamics are revealed, KAN’s \textit{symbolic regression} capability is activated to automatically extract explicit symbolic expressions describing the underlying governing dynamics. The symbolic model is then integrated back into the Neural ODE framework and calibrated to enhance predictive accuracy and improve the precision of the discovered governing equations.

The resulting framework supports both prediction and equation-level interpretation under partial observability. Through extensive evaluations on simulated benchmark systems and a real-world F-16 ground vibration dataset, it is demonstrated that SKANODE achieves strong predictive accuracy while providing transparent, interpretable descriptions of nonlinear dynamics. This capability broadens the practical use of deep learning for scientific modeling, engineering diagnostics, and physics-based decision making.

\section{Background}
\subsection{Physics-Encoded Machine Learning}
Physics-encoded machine learning encompasses a family of approaches that integrate physical principles directly into machine learning architectures, enhancing interpretability, generalization, and predictive performance \cite{faroughi2022physics,haywood2024discussing}. By embedding physical structures within model formulations, these methods provide powerful tools for modeling complex systems and uncovering governing dynamics.

A representative example of this class involves Dynamic Bayesian Networks (DBNs), often instantiated as dynamical variational autoencoders (VAEs) \cite{girin2020dynamical}, where physical knowledge is incorporated into the latent transition dynamics or the encoder-decoder structure. By leveraging such models, prior knowledge about system behavior can be integrated to improve both predictive accuracy and interpretability \cite{liu2022physics,revach2022kalmannet,liu2024neural,takeishi2021physics}. These techniques are closely related to deep state-space models that similarly exploit structured latent representations \cite{rangapuram2018deep,li2019learning}.

Neural Ordinary Differential Equations (NODEs) represent another important class of physics-encoded frameworks. Unlike conventional recurrent neural networks \cite{hochreiter1997long}, which operate in discrete time, NODEs model continuous-time system dynamics by directly learning the governing differential equations. This continuous formulation avoids discretization errors inherent to conventional architectures and offers a natural framework for representing physical processes that evolve continuously in time. Since their introduction, NODEs have been extended to address a variety of structured and domain-specific scenarios \cite{norcliffe2020second,norcliffe2021neural,liu2019neural,xia2021heavy,salvi2022neural}, with applications spanning structural dynamics \cite{najera2023structure}, computational physics \cite{lee2021parameterized}, pharmacology \cite{qian2021integrating}, and chemical engineering \cite{owoyele2022chemnode}. As such an instance, physics-informed NODEs integrate physical knowledge into the model architecture, providing a versatile framework for discrepancy modeling and structural identification of monitored systems \cite{lai2021structural}. As the subsequent sections will detail, our approach is built upon NODEs, leveraging its strengths in capturing complex continuous-time dynamics.

Other related efforts include Hamiltonian-inspired frameworks that enforce energy conservation through tailored network structures \cite{greydanus2019hamiltonian}, often coupled with symplectic integrators to preserve physical invariants \cite{saemundsson2020variational,bacsa2023symplectic}.

\subsection{Neural Ordinary Differential Equations}\label{sec:NODE}
The foundational work by \cite{chen2018neural} introduced Neural Ordinary Differential Equations (NODEs) as a continuous-depth counterpart to discrete neural networks, marking a paradigm shift in how dynamic systems can be approached within the machine learning community. Formally, let $\mathbf{z}(t)\in\mathbb{R}^n$ represent the state of a system at time $t$. The dynamic evolution of the state can be described by the ODE:
\begin{equation}
\frac{d\mathbf{z}(t)}{dt}=f(\mathbf{z}(t),t,\theta),
\end{equation}
where $f:\mathbb{R}^n\times\mathbb{R}\rightarrow\mathbb{R}^n$ is a neural network parameterized by $\theta$, which approximates the derivative of $\mathbf{z}$ with respect to time. Solving this ODE yields the continuous trajectory of $\mathbf{z}(t)$ from an initial state $\mathbf{z}(t_0)$ forward in time.

Training NODEs involves optimizing the parameters $\theta$ to minimize a loss function that measures discrepancies between model predictions and observed data. Unlike conventional neural networks, where backpropagation is applied directly through network layers, NODEs require differentiating through the entire ODE solver, since model predictions are generated by numerically integrating the learned differential equation. This continuous-time backpropagation is efficiently handled using the adjoint sensitivity method \cite{chen2018neural}, which computes gradients with respect to $\theta$ without storing full trajectories, enabling memory-efficient training over long time horizons.

\textbf{Augmented NODEs.}\quad While NODEs are well-suited for modeling continuous dynamics, they are limited in expressivity due to the topological constraints of ODE flows—specifically, their inability to represent functions involving intersecting trajectories or changing data topology. To address this, \cite{dupont2019augmented} introduced Augmented NODEs (ANODEs), which extend the state space by incorporating additional dimensions. By lifting the dynamics into a higher-dimensional space, ANODEs circumvent the representational limitations of standard NODEs. If $\mathbf{q}(t)\in\mathbb{R}^d$ denotes the original system state and $\mathbf{a}(t)\in\mathbb{R}^a$ the augmented components, the combined system evolves as:
\begin{equation}
\frac{d}{dt}\begin{bmatrix}\mathbf{q}(t) \\ \mathbf{a}(t)\end{bmatrix}=f\left(\begin{bmatrix}\mathbf{q}(t) \\ \mathbf{a}(t)\end{bmatrix},t,\theta\right),
\end{equation}
so that the augmented states belong to the higher dimensional space $\mathbb{R}^{d+a}$.

\textbf{Second-order NODEs.}\quad Following a related direction, \cite{norcliffe2020second} proposed Second-Order NODEs (SONODEs), designed to model systems governed by higher-order dynamics, common in many physical and engineering domains. SONODEs are formulated as:
\begin{equation}
\frac{d}{dt}\begin{bmatrix}\mathbf{q}(t) \\ \mathbf{a}(t)\end{bmatrix}=f\left(\begin{bmatrix}\mathbf{q}(t) \\ \mathbf{a}(t)\end{bmatrix},t,\theta\right)=\begin{bmatrix}\mathbf{a}(t) \\ h(\mathbf{q}(t),\mathbf{a}(t),t,\theta)\end{bmatrix}.
\end{equation}
The adjoint state in SONODEs also follows a second-order ODE \cite{norcliffe2020second}, making the adjoint sensitivity method directly applicable. While SONODE introduces an explicit derivative term \( \mathbf{a}(t) \) into the system formulation, the learned states remain abstract latent variables without guaranteed correspondence to physically meaningful quantities such as velocity. In particular, SONODE does not impose structured observation models that reflect the physical measurement processes (e.g., acceleration sensors), which can limit interpretability when applied to real-world physical systems.

\subsection{Symbolic Equation Discovery and Kolmogorov–Arnold Networks}
\label{sec:kan}

Symbolic equation discovery encompasses a family of methodologies aiming to learn explicit, interpretable mathematical expressions governing system behavior, rather than treating the model as a black box. In early approaches, genetic programming-based methods, such as Eureqa \cite{schmidt2009distilling}, searched over expression trees to discover functional relationships \cite{bongard2007automated,tsoulos2006solving}. Although powerful, these methods often suffer from long runtimes, overfitting, and lack of scalability as model complexity grows \cite{YU2025113395}.

More recent techniques such as Sparse Identification of Nonlinear Dynamics (SINDy) \cite{brunton2016discovering} and its extensions \cite{rudy2017data,schaeffer2017learning,pal2025physics} apply sparse regression over pre-defined function libraries to recover concise governing equations. Despite achieving strong performance in controlled scenarios, SINDy-like methods depend heavily on (i) access to full state variables and their derivatives, (ii) carefully selected candidate functions, and (iii) accurate numerical differentiation—assumptions that are often violated in real-world, noisy or partially observed systems.

To improve scalability and expressiveness, differentiable symbolic regression frameworks such as AI Feynman \cite{udrescu2020ai}, PySR \cite{cranmer2023interpretable}, and neural symbolic regression (e.g., \cite{kim2020integration,lample2019deep,kamienny2023deep}) employ neural-guided search or evolutionary optimization to discover analytic expressions. These methods offer greater automation but retain critical limitations: they often function as post-hoc pipelines requiring clean, fully observed data and are rarely integrated directly into architectures like NODEs \cite{YU2025113395}.

Inspired by the quest for scalable, end-to-end symbolic learning, transformer-augmented approaches \cite{biggio2021neural,kamienny2022end,becker2023predicting} and LLM-based frameworks (e.g., LLM-SR \cite{shojaee2025llm}) have recently emerged. These methods leverage pre-trained or generative models to propose symbolic equations, achieving better coverage and robustness. However, they still typically rely on full-state observations and external symbolic modules rather than end-to-end trainable ODE frameworks.

To overcome these limitations, {\color{black} the \textit{Kolmogorov–Arnold Network (KAN)} is adopted}—a neural architecture grounded in the Kolmogorov–Arnold representation theorem and introduced in recent literature \cite{liu2024kan}. KAN replaces traditional MLPs with networks whose activation function nodes are parameterized as learnable spline functions, enabling both:
\begin{itemize}
    \item \textbf{Flexible approximation}: KAN functions as a universal approximator, leveraging spline-based units to learn expressive mappings directly from data;
    \item \textbf{Symbolic extraction}: The learned activation function at each KAN node can be parsed into a compact symbolic form, representing distinct computational components that collectively assemble into the full expression of the governing equation..
\end{itemize}

Recent works \cite{liu2024kan,liu2024kan2} demonstrate KAN’s ability to discover explicit formulas, highlighting its applicability to scientific modeling. Crucially, KANs are fully differentiable and seamlessly integrate into NODE-based architectures. Unlike prior symbolic methods, KAN enables end-to-end symbolic equation discovery from indirect sensor data—such as accelerations—without requiring direct access to physical coordinates or numerical derivatives. This supports the recovery of governing equations under partial observability, making KAN an ideal component for our NODE-based framework.

{\color{black} \textbf{Sparsification.}\quad To support interpretability and generalization, KAN also incorporates a built-in sparsification mechanism. Unlike MLPs, which typically impose sparsity on linear weights, KAN applies L1 regularization directly on the learnable activation functions. Specifically, the L1 norm of an activation function $\phi$ is defined as the average magnitude of its output over a batch of $N_p$ inputs $\{x^{(s)}\}_{s=1}^{N_p}$:
\begin{equation}
|\phi|_1 \equiv \frac{1}{N_p} \sum_{s=1}^{N_p} \left| \phi(x^{(s)}) \right|.
\end{equation}

For a KAN layer $\mathbf{\Phi}$ with $n_{\text{in}}$ inputs and $n_{\text{out}}$ outputs, we denote $\phi_{i,j}$ as the learnable activation function connecting input $i$ to output $j$. The L1 norm of the entire layer is the sum of L1 norms of all activation functions:
\begin{equation}
|\mathbf{\Phi}|_1 \equiv \sum_{i=1}^{n_{\text{in}}} \sum_{j=1}^{n_{\text{out}}} |\phi_{i,j}|_1.
\end{equation}

To further enhance sparsity, KAN introduces an entropy regularization term that encourages selective activation. The entropy of a layer is defined as:
\begin{equation}
S(\mathbf{\Phi}) \equiv - \sum_{i=1}^{n_{\text{in}}} \sum_{j=1}^{n_{\text{out}}} \frac{|\phi_{i,j}|_1}{|\mathbf{\Phi}|_1} \log \left( \frac{|\phi_{i,j}|_1}{|\mathbf{\Phi}|_1} \right),
\end{equation}
which penalizes a uniform distribution over activation magnitudes and encourages a sparse subset of nodes to dominate.

The total training objective combines the prediction loss with both L1 and entropy regularization across all KAN layers indexed by $l = 0, \dots, L-1$:
\begin{equation}
\ell_{\text{total}} = \ell_{\text{pred}} + \lambda \left( \mu_1 \sum_{l=0}^{L-1} |\mathbf{\Phi}^l|_1 + \mu_2 \sum_{l=0}^{L-1} S(\mathbf{\Phi}^l) \right),
\end{equation}
where $\lambda$ controls the overall regularization strength, and $\mu_1, \mu_2$ (typically set to 1) balance the two regularization components.

This sparsification framework allows KAN to learn parsimonious and interpretable structures while preserving representational capacity. In this work, this built-in regularization strategy is adopted as part of the proposed symbolic modeling framework.
}

\subsection{State-Space Representation} \label{sec:ssm}
Transforming higher-order differential equations into a system of first-order equations—known as the state-space representation—has become standard practice across engineering disciplines \cite{kailath1980linear}. This formulation facilitates the analysis and numerical solution of complex dynamical systems by enabling the use of algorithms designed for first-order systems, which are generally more robust and easier to implement \cite{wang2024stablessm}.
Consider, for instance, a second-order nonlinear dynamical system expressed as:
\begin{equation}\label{eq:sode}
\ddot{x}(t)=h(x(t),\dot{x}(t),u(t)).
\end{equation}
Here, $h$ defines the dynamics of single degree-of-freedom (DOF) system, where $x(t)\in\mathbb{R}^1$ represents the displacement and $u(t)\in\mathbb{R}^1$ denotes the input to the system, typically reflecting an external force. In the state-space formulation, the primary focus is on identifying the essential variables that are crucial for completely characterizing the system's state at any particular moment. Typically, these variables include displacements and velocities in second-order physical systems. Thus, the original equation \eqref{eq:sode} can be restructured into two first-order equations, with displacements and velocities serving as the state variables, represented by:
\begin{equation}\label{eq:ssm}
\dot{\mathbf{z}}=
\begin{bmatrix}
\dot{x} \\ \dot{v}
\end{bmatrix}=
\begin{bmatrix}
v \\ h(x,v,u)
\end{bmatrix}
=f(\mathbf{z},u),
\end{equation}
where $\mathbf{z}=[x,v]^T$ encapsulates displacements and velocities. The same principle can be extended to higher-order dynamics by incorporating higher-order derivatives into the state vector. Employing a first-order differential equation simplifies the integration process over time, allowing for use of straightforward methods, such as Euler or Runge-Kutta schemes. This property is further advantageous for applying deep learning methods that are originally invented for first-order equations, such as NODEs, to learn higher-order dynamics.

An interesting observation is that the second first order equation in Eq.\eqref{eq:ssm} precisely describes the generation of acceleration, with the state vector as input to the function $h$. This suggests that when dealing with systems that feature acceleration measurements, the same function $h$ can describe the observation generation process. Consequently, $h$ remains the sole function to be learned, eliminating the need for an additional observation process model to infer the relationship between observation and state vector. Moreover, this also enforces a physically coupled system state and observation, given the derivative relation indicated by Eq.\eqref{eq:ssm}. This implies that the two coordinates and observation adhere to the strict constraint of being interconnected displacement, velocity, and acceleration quantities throughout the training process, enhancing the transparency of the entire framework. Further elaboration on this formulation is provided in the subsequent section, accompanied by a series of theoretical and experimental demonstrations to illustrate its advantages.

It is reminded that we here focus on second-order dynamics because they are prevalent in vibrational analysis, where the system states can be interpreted as displacement and velocity. These dynamics are significant in various engineering and physical systems, such as mechanical oscillators, electrical circuits, and structural vibrations \cite{thompson2002nonlinear}. This approach also serves as a foundation for extending the analysis to more complex, higher-order systems. For completeness, {\color{black} it is noted} that the state-space form naturally applies to higher-order dynamics. Given a general nonlinear $n$-order ODE of the form:
\begin{equation}
\frac{d^nx}{dt^n}=h\left(x,\frac{dx}{dt},\frac{d^2x}{dt^2},...,\frac{d^{n-1}x}{dt^{n-1}},u\right),
\end{equation}
{\color{black} the state variable $\mathbf{z}=[z_1,z_2,z_3,...,z_n]$ is defined as:}
\begin{equation}
z_1=x, z_2=\frac{dx}{dt}, z_3=\frac{d^2x}{dt^2},..., z_n=\frac{d^{n-1}x}{dt^{n-1}}.
\end{equation}
Then the system of state equations can be written as:
\begin{equation}
\dot{\mathbf{z}}=
\begin{bmatrix}
\dot{z}_1 \\ \dot{z}_2 \\ \vdots \\ \dot{z}_{n-1} \\ \dot{z}_n
\end{bmatrix}=
\begin{bmatrix}
z_2 \\ z_3 \\ \vdots \\ z_n \\ h(z_1,z_2,z_3,...,z_n,u)
\end{bmatrix}
=f(\mathbf{z},u).
\end{equation}

\section{Methodology}
\begin{figure}[h]
\centering
\includegraphics[width=1.0\linewidth]{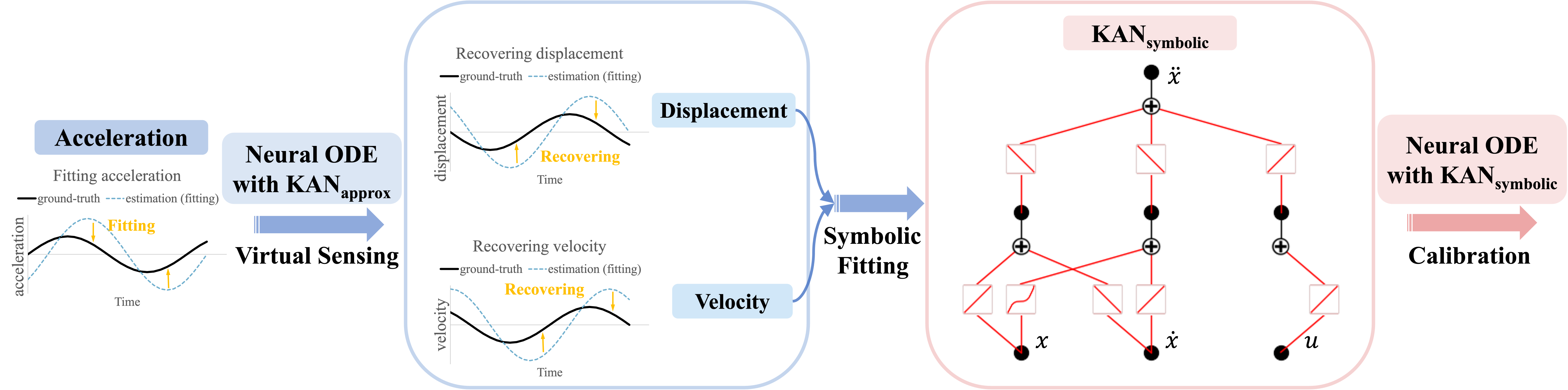}
\caption{A first-stage Kolmogorov--Arnold Network (\( \text{KAN}_{\text{approx}} \)) is employed as a universal function approximator within the proposed structured state-space Neural ODE framework to perform virtual sensing. Throughout training, the latent state variables and reconstructed observations are constrained to evolve as physically interpretable quantities---specifically displacement, velocity, and acceleration---enforced by the inductive biases encoded in the structured representation. Once coherent latent displacement and velocity states are learned, they are passed to a second network (\( \text{KAN}_{\text{symbolic}} \)), which performs symbolic equation discovery and extracts a closed-form expression for the governing dynamics. The resulting symbolic expression learned by (\( \text{KAN}_{\text{symbolic}} \)) is then substituted back into the Neural ODE, and the symbolic model is further trained to calibrate its coefficients, improving both the precision of the discovered governing equation and the predictive accuracy of system responses.}
\label{fig:graph_abs}
\end{figure}

This section presents the proposed \textit{Structured Kolmogorov–Arnold Neural ODE (SKANODE)} framework, which enables \textbf{end-to-end symbolic discovery of governing equations} from partially observed nonlinear dynamical systems. The core objective of SKANODE is to directly learn physically interpretable latent states---such as displacements and velocities---along with explicit symbolic governing equations, using only indirect sensor measurements such as accelerations. To achieve this, SKANODE integrates two key components: (i) a structured state-space formulation that incorporates physics-informed inductive biases by organizing latent states into physically meaningful coordinates, and (ii) a two-stage learning scheme utilizing Kolmogorov–Arnold Networks (KAN) to perform both universal function approximation and symbolic equation extraction. In the first stage, SKANODE employs a fully trainable KAN to approximate the latent acceleration dynamics under the structured Neural ODE formulation. Once physically meaningful latent states are revealed, the dynamics are distilled into compact symbolic forms using the spline-based representation of KAN. The discovered symbolic model is then integrated back into the Neural ODE framework and calibrated to enhance predictive accuracy and improve the precision of the discovered governing equation.

\subsection{Structured State-Space Modeling}

A general deep state-space model for dynamical systems can be formulated as:

\begin{align}
    \dot{\mathbf{z}}(t) &= f_{\theta_t}(\mathbf{z}(t), \mathbf{u}(t)), \label{eq:trans} \\
    \mathbf{y}(t) &= g_{\theta_o}(\mathbf{z}(t), \mathbf{u}(t)), \label{eq:obs}
\end{align}
where \( \mathbf{z}(t) \in \mathbb{R}^n \) denotes the latent state, and \( f_{\theta_t} \), \( g_{\theta_o} \) are learnable functions governing the transition and observation processes, respectively. While expressive, this formulation typically yields black-box models with limited interpretability—particularly for physical systems governed by second-order dynamics.

{\color{black}To address this, a structured state-space formulation is introduced, tailored for modeling such systems using Neural ODEs.} The latent state is defined as:

\begin{equation}
\mathbf{z}(t) =
\begin{bmatrix}
\mathbf{x}(t) \\
\mathbf{v}(t)
\end{bmatrix},
\end{equation}
where \( \mathbf{x}(t) \in \mathbb{R}^d \) represents displacement, and \( \mathbf{v}(t) \in \mathbb{R}^d \) represents velocity for a $d$-dimensional system. The system dynamics are modeled as:

\begin{align}
\frac{d}{dt}
\begin{bmatrix}
\mathbf{x}(t) \\
\mathbf{v}(t)
\end{bmatrix}
&=
\begin{bmatrix}
\mathbf{v}(t) \\
h_\theta(\mathbf{x}(t), \mathbf{v}(t), \mathbf{u}(t))
\end{bmatrix}, \label{eq:ssm_dynamics}
\end{align}
and the observation model is defined by:

\begin{align}
\mathbf{y}(t) = h_\theta(\mathbf{x}(t), \mathbf{v}(t), \mathbf{u}(t)), \label{eq:ssm_obs}
\end{align}
where \( \mathbf{u}(t) \in \mathbb{R}^p \) is the external input (e.g., force), and \( \mathbf{y}(t) \in \mathbb{R}^d \) are the measured accelerations. The same function \( h_\theta \) governs both the transition dynamics and the observation process, ensuring consistency between the latent state evolution and observable measurements. {\color{black} Let \( \mathcal{H} \) denote} the function space of candidate functions for the governing dynamics, consisting of all functions \( h : \mathbb{R}^{d} \times \mathbb{R}^{d} \times \mathbb{R}^{p} \rightarrow \mathbb{R}^{d} \) parameterized by the trainable network \( h_\theta \). In the SKANODE framework, \( \mathcal{H} \) is realized either by fully trainable Kolmogorov–Arnold Networks (KAN) during the universal approximation stage, or by their extracted symbolic representations after symbolic equation discovery.

This formulation implicitly encodes the second-order nature of the physical system. The first equation, \( \frac{d\mathbf{x}}{dt} = \mathbf{v} \), links displacement and velocity, while the second equation models the acceleration dynamics through \( \frac{d\mathbf{v}}{dt} = h_\theta(\mathbf{x}, \mathbf{v}, \mathbf{u}) \). By structuring the observation model directly as acceleration, the framework naturally reflects the physical measurement process often encountered in real-world sensor deployments (e.g., accelerometers).

Although the model is trained solely on acceleration measurements, the structured Neural ODE formulation enforces derivative relationships that allow the latent states to evolve consistently as physically interpretable displacement and velocity trajectories. This tight coupling provides a strong inductive bias that enables the model to recover meaningful latent dynamics even in the absence of direct state measurements.

\textbf{Training Objective.}\quad
To estimate the model parameters \( \theta \), {\color{black} the loss function is defined} as the discrepancy between the predicted and measured accelerations:

\begin{equation}
\mathcal{L}(\theta) = \sum_{t=t_0}^{t_N} \left\| \hat{\mathbf{y}}(t) - \mathbf{y}(t) \right\|^2 = \sum_{t=t_0}^{t_N} \left\| h_\theta\left( \hat{\mathbf{x}}(t), \hat{\mathbf{v}}(t), \mathbf{u}(t) \right) - \mathbf{y}(t) \right\|^2,
\end{equation}
where \( (\hat{\mathbf{x}}(t), \hat{\mathbf{v}}(t)) \) are the estimated latent states obtained by solving the Neural ODE forward in time. The summation is performed over all available time instants \( t_0, t_1, \dots, t_N \), which may be non-uniformly spaced in time.

{\color{black} To avoid overfitting, the sparsification techniques detailed in Section \ref{sec:kan} are incorporated into the training procedure. These include L1 regularization and entropy regularization, which control the model complexity and improve generalization. Additionally, in \( \text{KAN}_{\text{symbolic}} \), several linear nodes are predefined for specific terms, such as the linear components of the damping ratio and stiffness. These linear relationships are fixed apriori, with only the affine weights remaining trainable. Nonlinear terms, such as the cubic term in the Duffing oscillator and the quadratic term in the Van der Pol oscillator, are fully discovered by the network, including both the functional form and the affine parameters.}

\subsection{KAN-Based Two-Stage Learning}

{\color{black}To facilitate interpretable equation discovery within the structured state-space Neural ODE framework, the Kolmogorov--Arnold Network (KAN) is incorporated, which is a spline-based architecture capable of acting both as a universal function approximator and a symbolic expression learner.}

The SKANODE training procedure consists of two key stages:

\paragraph{Stage 1: Learning Structured Latent Dynamics with \( \text{KAN}_{\text{approx}} \)}  

In the first stage, {\color{black} the acceleration dynamics are modeled} using \( \text{KAN}_{\text{approx}} \), a fully trainable Kolmogorov–Arnold Network configured as a universal approximator. This network models the acceleration dynamics based on the structured Neural ODE formulation:
\[
\frac{d}{dt}
\begin{bmatrix}
\mathbf{x}(t) \\
\mathbf{v}(t)
\end{bmatrix}
=
\begin{bmatrix}
\mathbf{v}(t) \\
\text{KAN}_{\text{approx}}(\mathbf{x}(t), \mathbf{v}(t), \mathbf{u}(t))
\end{bmatrix}.
\]
The model is trained using acceleration-only measurements, and due to the enforced derivative relationships in the ODE structure, the latent variables \( \mathbf{x}(t) \) and \( \mathbf{v}(t) \) evolve as displacement and velocity trajectories. The purpose of this stage is to establish a coherent and physically meaningful latent space that reflects the true system dynamics, even under partial observability.

\paragraph{Stage 2: Symbolic Equation Discovery and Calibration with \( \text{KAN}_{\text{symbolic}} \)}  

Once \( \text{KAN}_{\text{approx}} \) has been trained and meaningful latent states are recovered, {\color{black} the learned displacement and velocity signals are used} as inputs to a second KAN, denoted \( \text{KAN}_{\text{symbolic}} \). This KAN is utilized in symbolic mode, enabling automatic extraction of analytic expressions for the system’s governing dynamics.

The symbolic structure discovered by \( \text{KAN}_{\text{symbolic}} \) is then integrated back into the structured ODE framework, replacing \( \text{KAN}_{\text{approx}} \). In this phase, the symbolic form is kept fixed and its affine coefficients are calibrated by further training the Neural ODE with the new symbolic \( \text{KAN}_{\text{symbolic}} \). This calibration process ensures that the symbolic expression remains interpretable while achieving a high-fidelity approximation of both the system response and the true dynamics.

This two-stage learning approach—first leveraging \( \text{KAN}_{\text{approx}} \) to recover physically meaningful latent states, and then using \( \text{KAN}_{\text{symbolic}} \) for symbolic equation discovery and calibration—enables SKANODE to offer a unified, interpretable, and end-to-end differentiable framework for modeling nonlinear dynamical systems. Importantly, the entire process requires only indirect measurements, such as accelerations, eliminating the need for full-state observations or numerical differentiation, which are common limitations in existing symbolic regression methods.

\subsection{Properties of SKANODE}\label{sec:prop}

In general deep state-space models, such as those described by Eqs.~\eqref{eq:trans} and \eqref{eq:obs}, the learned latent state representation is inherently non-unique. Specifically, multiple state-space parameterizations related through admissible coordinate transformations can yield identical observable predictions. This well-known non-identifiability property (e.g., \cite{simon2006optimal}) implies that conventional unconstrained state-space models often produce latent representations that are mathematically valid but physically uninterpretable, as the latent states may correspond to arbitrary linear or nonlinear transformations of the true physical coordinates.

The structured state-space formulation adopted in SKANODE directly addresses this ambiguity by embedding physically meaningful inductive biases into both the latent state design and the observation model. By explicitly structuring the latent space as displacement and velocity, and by defining acceleration as both the second-order state derivative and the observed output, SKANODE tightly constrains the learned representation to evolve consistently with physical quantities of interest. This design significantly reduces the solution space, mitigates coordinate indeterminacy, and promotes the recovery of interpretable system dynamics even when trained solely on indirect measurements such as accelerations.

{\color{black}A formal identifiability result is now stated, establishing that under certain conditions SKANODE can recover the true governing dynamics of the system.}

\begin{prop}[Identifiability of SKANODE]\label{prop:identifiability}
Consider a true second-order dynamical system of the form:
\[
\ddot{x}(t) = h^{\ast}(x(t), \dot{x}(t), u(t)),
\]
where \( h^{\ast} \) denotes the true governing function. Assume:

\begin{itemize}
    \item[(i)] The measured observable is acceleration: \( y(t) = \ddot{x}(t) \).
    \item[(ii)] The structured state-space model given by Eq.~\eqref{eq:ssm_dynamics} and observation model Eq.~\eqref{eq:ssm_obs} are adopted.
    \item[(iii)] The function space \( \mathcal{H} \) consists of functions \( h_\theta \) parameterized by the neural network model, and the true function satisfies \( h^{\ast} \in \mathcal{H} \).
    \item[(iv)] The mappings \( t \mapsto y(t) \) and \( t \mapsto \hat{y}(t) \) both belong to a finite-dimensional function class \( \mathcal{F} \) of dimension \( d_{\mathcal{F}} \).
    \item[(v)] The training data contains \( N > d_{\mathcal{F}} \) distinct time samples \( t_0, \dots, t_N \).
\end{itemize}

Then, if minimizing the loss function
\begin{equation}
\mathcal{L}(\theta) = \sum_{i=0}^{N} \left\| h_\theta\left( \hat{x}(t_i), \hat{v}(t_i), u(t_i) \right) - y(t_i) \right\|^2
\end{equation}
yields zero loss, i.e., \( \mathcal{L}(\theta) = 0 \), it follows that
\[
h_\theta(\hat{x}(t), \hat{v}(t), u(t)) \equiv h^{\ast}(x(t), v(t), u(t)),
\]
and the estimated latent states \( \hat{x}(t), \hat{v}(t) \) coincide with the true states \( x(t), v(t) \) up to numerical integration accuracy.
\end{prop}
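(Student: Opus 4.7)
The plan is to exploit the chain of deterministic couplings the structured model imposes: zero loss on the sampled observations forces agreement of the two acceleration signals at sufficiently many points, the finite-dimensional assumption promotes this to pointwise equality for all $t$, and the structured ODE then lifts this acceleration-level equality back to state-level and function-level equality.

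First, I would unpack the consequence of $\mathcal{L}(\theta)=0$. Since each summand is a non-negative squared norm, vanishing of the total loss forces $\hat{y}(t_i) = y(t_i)$ for every sample $t_i$, $i=0,\ldots,N$. Next I would invoke assumption (iv): the difference $e(t) := \hat{y}(t) - y(t)$ lies in the finite-dimensional function class $\mathcal{F}$, so $e$ is determined by its coordinates in any basis of $\mathcal{F}$. Because $e$ vanishes at $N+1 > d_{\mathcal{F}}$ distinct points, a standard unisolvency / Vandermonde-type argument on the evaluation functionals $\delta_{t_i}$ implies the coordinate vector must be zero, hence $e \equiv 0$ as a function of $t$, i.e., $\hat{y}(t) = y(t)$ for all $t$ in the interval of interest.

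Now I would translate this observation-level equality into state-level equality using the structured model \eqref{eq:ssm_dynamics}--\eqref{eq:ssm_obs}. Under the structured form, $\hat{y}(t) = \tfrac{d\hat{v}}{dt}(t) = \ddot{\hat{x}}(t)$, while by assumption (i) $y(t) = \ddot{x}(t)$. So $\ddot{\hat{x}}(t) \equiv \ddot{x}(t)$ on the trajectory interval. Integrating once yields $\hat{v}(t) = v(t) + c_1$, and integrating again yields $\hat{x}(t) = x(t) + c_1 t + c_0$, where the constants of integration are fixed by the initial conditions of the Neural ODE; assuming the standard practice of matching $\hat{x}(t_0) = x(t_0)$, $\hat{v}(t_0) = v(t_0)$ (either directly, or up to the integrator's local truncation error captured by the ``numerical integration accuracy'' clause), both constants vanish and I obtain $\hat{x}(t) \equiv x(t)$, $\hat{v}(t) \equiv v(t)$.

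Finally, to lift this to equality of the learned map with $h^{\ast}$, I would substitute back into the common transition/observation function: along the trajectory,
\begin{equation}
h_\theta(\hat{x}(t),\hat{v}(t),u(t)) = \hat{y}(t) = y(t) = h^{\ast}(x(t),v(t),u(t)) = h^{\ast}(\hat{x}(t),\hat{v}(t),u(t)),
\end{equation}
so $h_\theta$ and $h^{\ast}$ agree on the trajectory-induced subset of $\mathbb{R}^d\times\mathbb{R}^d\times\mathbb{R}^p$; combined with assumption (iii) that $h^{\ast} \in \mathcal{H}$ (so that no extraneous reparameterization within the same class can produce a second zero-loss fit with different functional form), this gives the claimed identity. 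The main obstacle, and the step I would emphasize in writing the proof, is the passage from pointwise agreement on the sampled grid to functional agreement as elements of $\mathcal{H}$: strictly speaking the argument above only gives $h_\theta \equiv h^{\ast}$ on the image of the single input-driven trajectory, and extending this to all of $\mathcal{H}$ requires either a persistency-of-excitation style condition on $u(t)$ (so that the trajectory densely explores the relevant region of state space) or a regularity restriction on $\mathcal{H}$ that rules out alternative members agreeing on that image. I would discuss this caveat explicitly and note that the finite-dimensional hypothesis (iv) on the observation space, together with the structured coupling, is precisely what rules out the non-identifiability pathology flagged at the start of Section~\ref{sec:prop}.
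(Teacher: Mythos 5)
Your proof is correct and follows essentially the same route as the paper's: zero loss forces $\hat y(t_i)=y(t_i)$ at the samples, hypotheses (iv)--(v) upgrade this to $\hat y\equiv y$ for all $t$, and the resulting acceleration identity together with matched initial conditions gives $\hat x\equiv x$, $\hat v\equiv v$ and the along-trajectory equality of $h_\theta$ with $h^{\ast}$. The only differences are cosmetic: you integrate the acceleration identity twice where the paper invokes Picard--Lindel\"of uniqueness, and your explicit caveats (the matched-initial-condition requirement and the fact that $h_\theta=h^{\ast}$ is established only on the trajectory-induced subset of the domain, absent a persistency-of-excitation condition) simply surface assumptions the paper's proof leaves implicit.
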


This result highlights that, under appropriate structural assumptions, sufficient data richness, and model capacity, the SKANODE framework enables the recovery of both physically meaningful latent states and the true governing dynamics directly from indirect measurements. The structured state-space formulation, combined with consistency between state evolution and observation, eliminates the need for full-state measurements or numerical differentiation, distinguishing SKANODE from conventional black-box approaches and classical symbolic regression methods.

The validity of this proposition is further demonstrated empirically through the simulated and real-world experiments presented in Section~\ref{sec:results}.


\section{Results}\label{sec:results}
To demonstrate the enhanced interpretability and precision of the dynamics learned by the proposed method, {\color{black} SKANODE is assessed} through a series of examples involving both simulated and real-world nonlinear dynamical systems. {\color{black} SKANODE is first applied} to two classic benchmark systems broadly used in nonlinear dynamics research: the Duffing oscillator and the Van der Pol oscillator \cite{thompson2002nonlinear}. These synthetic examples illustrate SKANODE’s ability to accurately recover physically meaningful latent states and to extract precise symbolic expressions for the governing equations. {\color{black} The evaluation is then extended} to real-world data by analyzing ground vibration measurements from an F-16 aircraft, aiming to model the nonlinear interactions at the interface between the aircraft's right wing and its payload. The results across these experiments further support the theoretical properties discussed in the previous section, demonstrating the effectiveness of SKANODE in practical applications.

For all comparative experiments, care was taken to ensure a consistent training setup across models. The only architectural difference lies in the use of the KAN in SKANODE, whereas ANODE and SONODE adopt MLPs. {\color{black}
In addition to deep learning baselines, two classical input-output identification models are also included as references: the linear AutoRegressive model with eXogenous inputs (ARX) and its nonlinear extension (NARX). Both methods are fitted directly on the measured acceleration signals, with the same input-output split as SKANODE, and are evaluated on the same test horizon using identical error metrics. For ARX and NARX, only accelerations are predicted directly; the corresponding displacement and velocity trajectories are obtained by numerical time integration of the predicted accelerations using the ground-truth initial conditions.}

{\color{black} The architectures of the KAN modules used in SKANODE are visualized in the corresponding experimental subsections. In our implementation, B-splines are adopted as the spline basis, with 5th-order B-splines specifically employed. The activation functions are modeled via B-splines with a residual connection, which is a typical design feature of KANs. Each KAN is initialized with 6 knots (forming 5 intervals), and the knots are adaptively updated during the training process.} For MLP-based models, a 5-layer network with 32 hidden units per layer is adopted throughout. Identical data preprocessing, batch sizes, learning rates schedules, and total training epochs were applied across all models to ensure a fair comparison. {\color{black} The experiments are conducted on a local workstation with an NVIDIA GeForce GTX 5080 GPU. Each experiment is run for 1,000 epochs.} The data and code used in this paper are publicly available on GitHub at \url{https://github.com/liouvill/SKANODE}.
\subsection{Duffing Oscillator}
A Duffing oscillator is a representative nonlinear dynamical system describing the motion of a damped oscillator with cubic stiffness nonlinearity. The governing equation is given by:
\begin{equation}\label{eq:duffing}
m\ddot{x}(t) + c\dot{x}(t) + kx(t) + k_3x^3(t) = u(t),
\end{equation}
where $m = 1.0\, \mathrm{kg}$, $c = 0.1\, \mathrm{Ns/m}$, $k = 1.5\, \mathrm{N/m}$, and $k_3 = 10\, \mathrm{N/m}^3$, resulting in strongly nonlinear dynamics. The system is excited by a sinusoidal input $u(t) = \sin(\pi t)$. Data are simulated at a sampling frequency of 10 Hz over 60 seconds, with the first 20 seconds used for training and the remaining 40 seconds for testing.

\begin{figure}[h]
\centering
\includegraphics[width=1.0\linewidth]{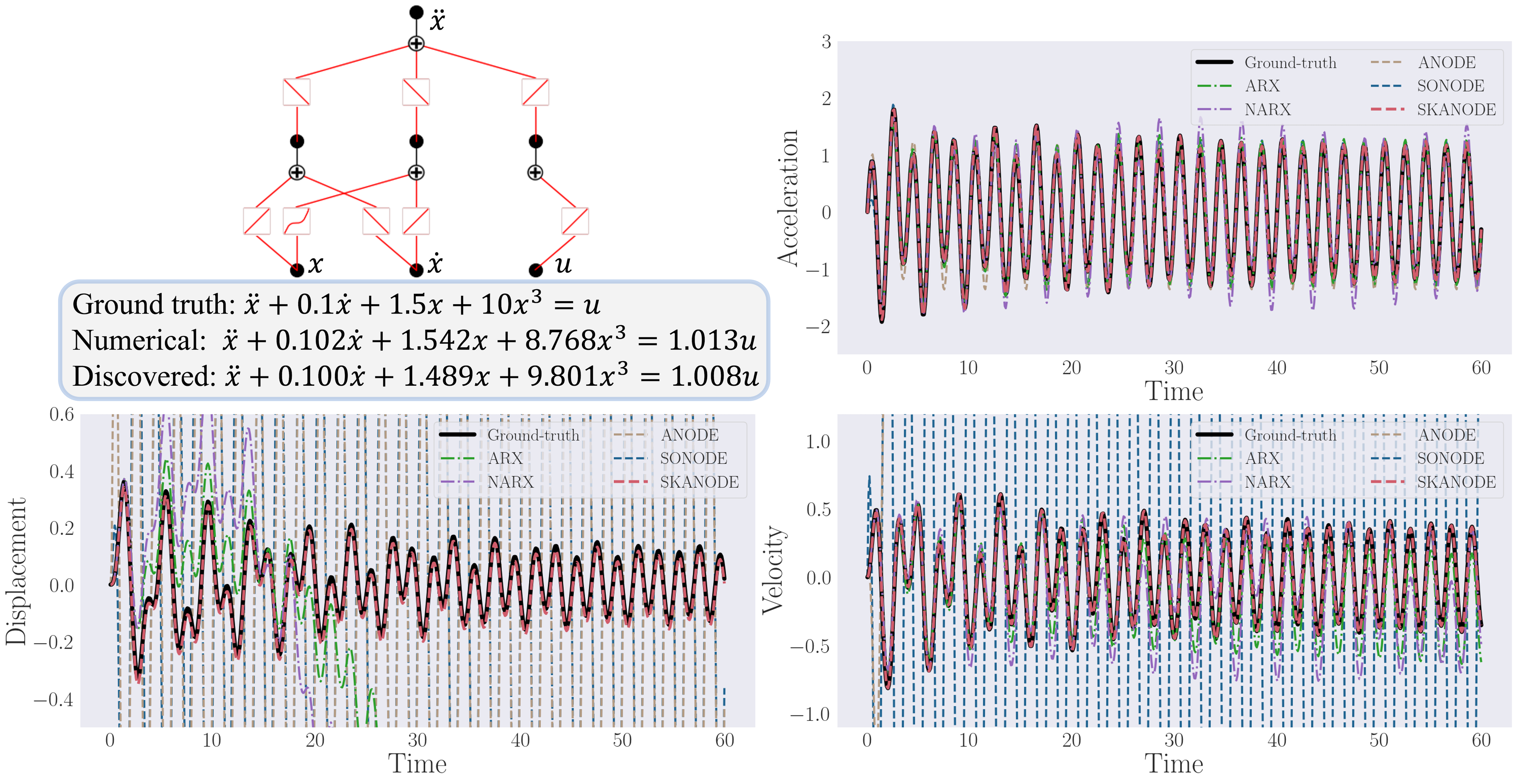}
\caption{Results on the Duffing oscillator. \textbf{Top left:} Symbolic governing equation discovered by $\text{KAN}_{\text{symbolic}}$, compared against the numerical baseline. The identified node receiving displacement input $x$ correctly captures the expected cubic nonlinearity. \textbf{Top right:} Predicted system observables (accelerations) obtained using the SKANODE framework. SKANODE accurately reconstructs accelerations directly from the inferred latent dynamics, without the need for a separate observation model. \textbf{Bottom:} Inferred latent state variables compared to ground truth. The latent states recovered by SKANODE correspond closely to physically meaningful displacement and velocity trajectories, unlike ANODE and SONODE, whose latent states remain abstract representations without clear physical interpretation.}
\label{fig:duffing}
\end{figure}

{\color{black} {\color{black} The performance of SKANODE is evaluated and compared against ANODE and SONODE, as well as classical input-output identification baselines ARX and NARX, on this system, as shown in the top-right subfigure of Figure~\ref{fig:duffing}.} For benchmarking the symbolic learning capability, a numerical baseline is constructed by directly integrating the measured acceleration signals to obtain velocity and displacement, which are then supplied to SINDy to extract a compact symbolic expression. The predicted trajectories from this numerical benchmark are generated by solving the discovered equations and serve to assess the accuracy of the identified dynamics.
It is important to note that this numerical baseline serves solely as a reference for benchmarking the symbolic discovery process; it is not a predictor by itself, as it merely integrates the measured signals without learning the underlying dynamics prior to extracting a symbolic form of the governing equations, and its predictions completely depend on the quality of the extracted equation. In contrast, the proposed SKANODE framework functions both as a dynamics learner—capable of end-to-end prediction of system responses—and as a symbolic extractor that uncovers governing equations from the learned latent quantities.}

The symbolic governing equation extracted by \( \text{KAN}_{\text{symbolic}} \) within SKANODE is shown in Figure~\ref{fig:duffing}, where the identified expression accurately captures the true system dynamics. Notably, the extracted symbolic network exhibits a distinct cubic node connected to the displacement input \( x \), consistent with the characteristic cubic stiffness nonlinearity inherent in the Duffing oscillator. The predicted accelerations and recovered latent state trajectories are also presented. While all three models achieve reasonable accuracy in reconstructing the observable accelerations, only SKANODE successfully recovers latent states that correspond directly to physically meaningful displacement and velocity trajectories. This advantage stems from its structured state-space design and the symbolic learning capability of the Kolmogorov--Arnold Network.

\begin{table}[htbp]
\caption{Model performance measured in MSE for Duffing oscillator experiments subjected to different levels of observation noise. The observational noise is assumed to be Gaussian, randomly sampled from the distribution $\mathcal{N}(0,\sigma^2)$, and the value listed in the noise level column indicates the value of standard deviation $\sigma$. ANODE fails to converge even at the minimal noise level of 0.001.}
\centering
\begin{tabular}{ccccccc}
\hline
& \makecell{Noise\\level} &\vline &Model & \makecell{acceleration\\(measurement)} & displacement & velocity \\
\hline
& &\vline &Numerical 
&3.37$\times10^{-3}$ 
&2.69$\times10^{-3}$ 
&1.06$\times10^{-3}$ \\

& &\vline &ARX 
&1.22$\times10^{-2}$ 
&5.87 
&2.12$\times10^{-2}$ \\

& &\vline &NARX 
&6.82$\times10^{-2}$ 
&6.13 
&4.65$\times10^{-2}$ \\

&0 &\vline &ANODE 
&(3.15~$\pm$~1.42)$\times10^{-2}$ 
&(9.95~$\pm$~0.32)$\times10^{-1}$ 
&6.06~$\pm$~5.17 \\

& &\vline &SONODE 
&(1.52~$\pm$~0.71)$\times10^{-2}$ 
&1.01~$\pm$~0.02 
&9.46~$\pm$~0.14 \\

& &\vline &\textbf{S$^3$NODE} 
&(\textbf{1.11}~$\pm$~\textbf{0.51})$\times\mathbf{10^{-3}}$ 
&(\textbf{1.45}~$\pm$~\textbf{0.76})$\times\mathbf{10^{-3}}$ 
&(\textbf{4.53}~$\pm$~\textbf{2.78})$\times\mathbf{10^{-4}}$ \\

& &\vline &\textbf{SKANODE} 
&(\textbf{3.29}~$\pm$~\textbf{0.00})$\times\mathbf{10^{-5}}$ 
&(\textbf{2.81}~$\pm$~\textbf{0.00})$\times\mathbf{10^{-4}}$ 
&(\textbf{1.31}~$\pm$~\textbf{0.00})$\times\mathbf{10^{-5}}$ \\

\hline
& &\vline &Numerical 
&1.45$\times10^{-3}$ 
&1.12$\times10^{-3}$ 
&4.51$\times10^{-4}$ \\

& &\vline &ARX 
&1.41$\times10^{-2}$ 
&9.68 
&2.97$\times10^{-2}$ \\

& &\vline &NARX 
&1.57$\times10^{-1}$ 
&2.94$\times10^{-1}$ 
&2.74$\times10^{-2}$ \\

&0.001 &\vline &ANODE 
&NaN 
&NaN 
&NaN \\

& &\vline &SONODE 
&(5.41~$\pm$~6.13)$\times10^{-2}$ 
&1.01~$\pm$~0.02 
&9.33~$\pm$~0.33 \\

& &\vline &\textbf{S$^3$NODE} 
&(\textbf{6.78}~$\pm$~\textbf{3.43})$\times\mathbf{10^{-4}}$ 
&(\textbf{1.08}~$\pm$~\textbf{0.49})$\times\mathbf{10^{-3}}$ 
&(\textbf{2.85}~$\pm$~\textbf{1.84})$\times\mathbf{10^{-4}}$ \\

& &\vline &\textbf{SKANODE}  
&(\textbf{3.36}~$\pm$~\textbf{0.00})$\times\mathbf{10^{-5}}$ 
&(\textbf{2.80}~$\pm$~\textbf{0.00})$\times\mathbf{10^{-4}}$ 
&(\textbf{1.30}~$\pm$~\textbf{0.00})$\times\mathbf{10^{-5}}$ \\

\hline
& &\vline &Numerical 
&4.93$\times10^{-3}$ 
&4.48$\times10^{-3}$ 
&1.53$\times10^{-3}$ \\

& &\vline &ARX 
&1.74$\times10^{-2}$ 
&12.49 
&3.81$\times10^{-2}$ \\

& &\vline &NARX 
&1.82$\times10^{-2}$ 
&17.62 
&6.38$\times10^{-2}$ \\

&0.005 &\vline &ANODE 
&NaN 
&NaN 
&NaN \\

& &\vline &SONODE 
&(2.19~$\pm$~2.86)$\times10^{-1}$ 
&(7.92~$\pm$~3.07)$\times10^{-1}$ 
&6.89~$\pm$~3.59 \\

& &\vline &\textbf{S$^3$NODE} 
&(\textbf{1.25}~$\pm$~\textbf{0.99})$\times\mathbf{10^{-3}}$ 
&(\textbf{1.46}~$\pm$~\textbf{1.07})$\times\mathbf{10^{-3}}$ 
&(\textbf{5.09}~$\pm$~\textbf{4.41})$\times\mathbf{10^{-4}}$ \\

& &\vline &\textbf{SKANODE}  
&(\textbf{5.90}~$\pm$~\textbf{0.00})$\times\mathbf{10^{-5}}$ 
&(\textbf{2.90}~$\pm$~\textbf{0.00})$\times\mathbf{10^{-4}}$ 
&(\textbf{1.40}~$\pm$~\textbf{0.00})$\times\mathbf{10^{-5}}$ \\

\hline
& &\vline &Numerical 
&3.04$\times10^{-2}$ 
&1.28$\times10^{-2}$ 
&1.53$\times10^{-2}$ \\

& &\vline &ARX 
&2.34$\times10^{-2}$ 
&44.40 
&9.17$\times10^{-2}$ \\

& &\vline &NARX 
&1.91$\times10^{-2}$ 
&1.52$\times10^{3}$ 
&3.36 \\

&0.01 &\vline &ANODE 
&NaN 
&NaN 
&NaN \\

& &\vline &SONODE 
&(1.76~$\pm$~0.57)$\times10^{-2}$ 
&1.01~$\pm$~0.02 
&9.39~$\pm$~0.22 \\

& &\vline &\textbf{S$^3$NODE} 
&(\textbf{1.18}~$\pm$~\textbf{1.00})$\times\mathbf{10^{-3}}$ 
&(\textbf{4.64}~$\pm$~\textbf{1.97})$\times\mathbf{10^{-4}}$ 
&(\textbf{4.07}~$\pm$~\textbf{4.62})$\times\mathbf{10^{-4}}$ \\

& &\vline &\textbf{SKANODE}  
&(\textbf{1.38}~$\pm$~\textbf{0.00})$\times\mathbf{10^{-4}}$ 
&(\textbf{2.93}~$\pm$~\textbf{0.00})$\times\mathbf{10^{-4}}$ 
&(\textbf{1.44}~$\pm$~\textbf{0.00})$\times\mathbf{10^{-5}}$ \\

\hline
\end{tabular}
\label{tab:duffing}
\end{table}

To quantitatively evaluate model performance, {\color{black} mean squared errors (MSE) are computed} for both predicted accelerations and inferred latent states against the ground truth. The mean and standard deviation of MSE values over three independent runs are reported in Table~\ref{tab:duffing}. All models achieve similar performance in predicting accelerations, while SKANODE consistently shows slightly superior accuracy. More pronounced differences appear in the recovery of latent states: SKANODE achieves the lowest MSE and variance, indicating its ability to extract physically interpretable dynamics reliably across repeated trials. In contrast, ANODE displays high variance and poor latent state interpretability, confirming its tendency to learn arbitrary latent representations. SONODE produces more stable but still non-interpretable latent trajectories, as its structure does not explicitly align with the true physical coordinates. {\color{black} ARX and NARX provide additional reference points from classical system identification. While they can sometimes yield reasonable acceleration predictions, they do not impose physically meaningful state structure and therefore do not support reliable displacement and velocity recovery, especially under measurement noise.
Consistent with this limitation, ARX/NARX exhibit substantially larger errors for latent state reconstruction.} Furthermore, {\color{black} a structured state-space Neural ODE using a conventional multilayer perceptron (MLP) in place of KAN is also evaluated}---referred to as S$^3$NODE in Table~\ref{tab:duffing}---implemented as a 5-layer MLP with 32 hidden units per layer. These results indicate that the advantage of SKANODE stems both from its structured inductive biases in the latent and observation models and from the symbolic discovery capability of KAN, which improves interpretability and predictive accuracy by uncovering governing equations and enhancing the estimation of system quantities.  

\begin{table}[htbp]
\caption{Model performance measured in SSIM for Duffing oscillator experiments subjected to different levels of observation noise. Structural Similarity Index (SSIM) is used to quantify the perceptual similarity between predicted and ground-truth signals. The observational noise is assumed to be Gaussian, randomly sampled from the distribution $\mathcal{N}(0,\sigma^2)$, and the value listed in the noise level column indicates the value of standard deviation $\sigma$. ANODE fails to converge even at the minimal noise level of 0.001.}
\centering
\begin{tabular}{ccccccc}
\hline
& \makecell{Noise\\level} &\vline &Model & \makecell{acceleration\\(measurement)} & displacement & velocity \\
\hline
& &\vline &Numerical 
&0.9759 
&0.5822 
&0.9317 \\

& &\vline &ARX 
&0.9294 
&0.0570 
&0.6204 \\

& &\vline &NARX 
&0.7995 
&0.0388 
&0.4557 \\

& &\vline &ANODE 
&0.8612~$\pm$~0.0471 
&0.0335~$\pm$~0.0010 
&0.0064~$\pm$~0.0013 \\

&0 &\vline &SONODE 
&0.9240~$\pm$~0.0253 
&0.0368~$\pm$~0.0013 
&0.0365~$\pm$~0.0016 \\

& &\vline &\textbf{S$^3$NODE} 
&\textbf{0.9850}~$\pm$~\textbf{0.0083} 
&\textbf{0.6990}~$\pm$~\textbf{0.0847} 
&\textbf{0.9623}~$\pm$~\textbf{0.0209} \\

& &\vline &\textbf{SKANODE}  
&\textbf{0.9997}~$\pm$~\textbf{0.0000} 
&\textbf{0.8689}~$\pm$~\textbf{0.0000} 
&\textbf{0.9974}~$\pm$~\textbf{0.0000} \\

\hline
& &\vline &Numerical 
&0.9873 
&0.7285 
&0.9599 \\

& &\vline &ARX 
&0.9241 
&0.0328 
&0.5355 \\

& &\vline &NARX 
&0.5440 
&0.0271 
&0.4681 \\

&0.001 &\vline &ANODE 
&NaN 
&NaN 
&NaN \\

& &\vline &SONODE 
&0.8623~$\pm$~0.1085 
&0.0366~$\pm$~0.0009 
&0.0370~$\pm$~0.0024 \\

& &\vline &\textbf{S$^3$NODE} 
&\textbf{0.9895}~$\pm$~\textbf{0.0056} 
&\textbf{0.7394}~$\pm$~\textbf{0.0681} 
&\textbf{0.9720}~$\pm$~\textbf{0.0149} \\

& &\vline &\textbf{SKANODE}  
&\textbf{0.9997}~$\pm$~\textbf{0.0000} 
&\textbf{0.8691}~$\pm$~\textbf{0.0000} 
&\textbf{0.9974}~$\pm$~\textbf{0.0000} \\

\hline
& &\vline &Numerical 
&0.9710 
&0.4690 
&0.9173 \\

& &\vline &ARX 
&0.9142 
&0.0177 
&0.4702 \\

& &\vline &NARX 
&0.9132 
&0.0224 
&0.3821 \\

&0.005 &\vline &ANODE 
&NaN 
&NaN 
&NaN \\

& &\vline &SONODE 
&0.6693~$\pm$~0.3518 
&0.0309~$\pm$~0.0093 
&0.0406~$\pm$~0.0053 \\

& &\vline &\textbf{S$^3$NODE} 
&\textbf{0.9845}~$\pm$~\textbf{0.0101} 
&\textbf{0.7171}~$\pm$~\textbf{0.0969} 
&\textbf{0.9605}~$\pm$~\textbf{0.0272} \\

& &\vline &\textbf{SKANODE}  
&\textbf{0.9995}~$\pm$~\textbf{0.0000} 
&\textbf{0.8663}~$\pm$~\textbf{0.0000} 
&\textbf{0.9972}~$\pm$~\textbf{0.0000} \\

\hline
& &\vline &Numerical 
&0.8667 
&0.2673 
&0.6864 \\

& &\vline &ARX 
&0.9037 
&0.0308 
&0.2436 \\

& &\vline &NARX 
&0.8938 
&0.0698 
&0.0616 \\

&0.01 &\vline &ANODE 
&NaN 
&NaN 
&NaN \\

& &\vline &SONODE 
&0.9129~$\pm$~0.0218 
&0.0374~$\pm$~0.0015 
&0.0355~$\pm$~0.0009 \\

& &\vline &\textbf{S$^3$NODE} 
&\textbf{0.9865}~$\pm$~\textbf{0.0120} 
&\textbf{0.8452}~$\pm$~\textbf{0.0458} 
&\textbf{0.9692}~$\pm$~\textbf{0.0291} \\

& &\vline &\textbf{SKANODE}  
&\textbf{0.9990}~$\pm$~\textbf{0.0000} 
&\textbf{0.8654}~$\pm$~\textbf{0.0000} 
&\textbf{0.9971}~$\pm$~\textbf{0.0000} \\

\hline
\end{tabular}
\label{tab:ssim_duffing}
\end{table}

Importantly, SKANODE not only improves interpretability but also contributes to better predictive performance, underscoring the value of incorporating physics-encoded structure and symbolic discovery capability. To further assess robustness, Table~\ref{tab:duffing} presents results under varying levels of observation noise. It is noted that ANODE experiences severe convergence issues under noisy conditions, even at the minimal noise level, whereas SKANODE maintains stable performance across all scenarios.

{\color{black}To complement MSE and provide a more diagnostic perspective, the Structural Similarity Index (SSIM) is also reported. SSIM captures the perceptual similarity between predicted and ground-truth trajectories. Results across different levels of observation noise are summarized in Table~\ref{tab:ssim_duffing}. The SSIM analysis reinforces the conclusions drawn from MSE: SKANODE consistently attains highest scores ($>0.99$ across all states and noise levels), indicating that its predictions are not only numerically accurate but also structurally faithful. S$^3$NODE also demonstrates strong performance but remains consistently below SKANODE, while SONODE yields moderate similarity and ANODE fails to converge under noisy conditions.

\begin{figure}[h]
\centering
\includegraphics[width=1.0\linewidth]{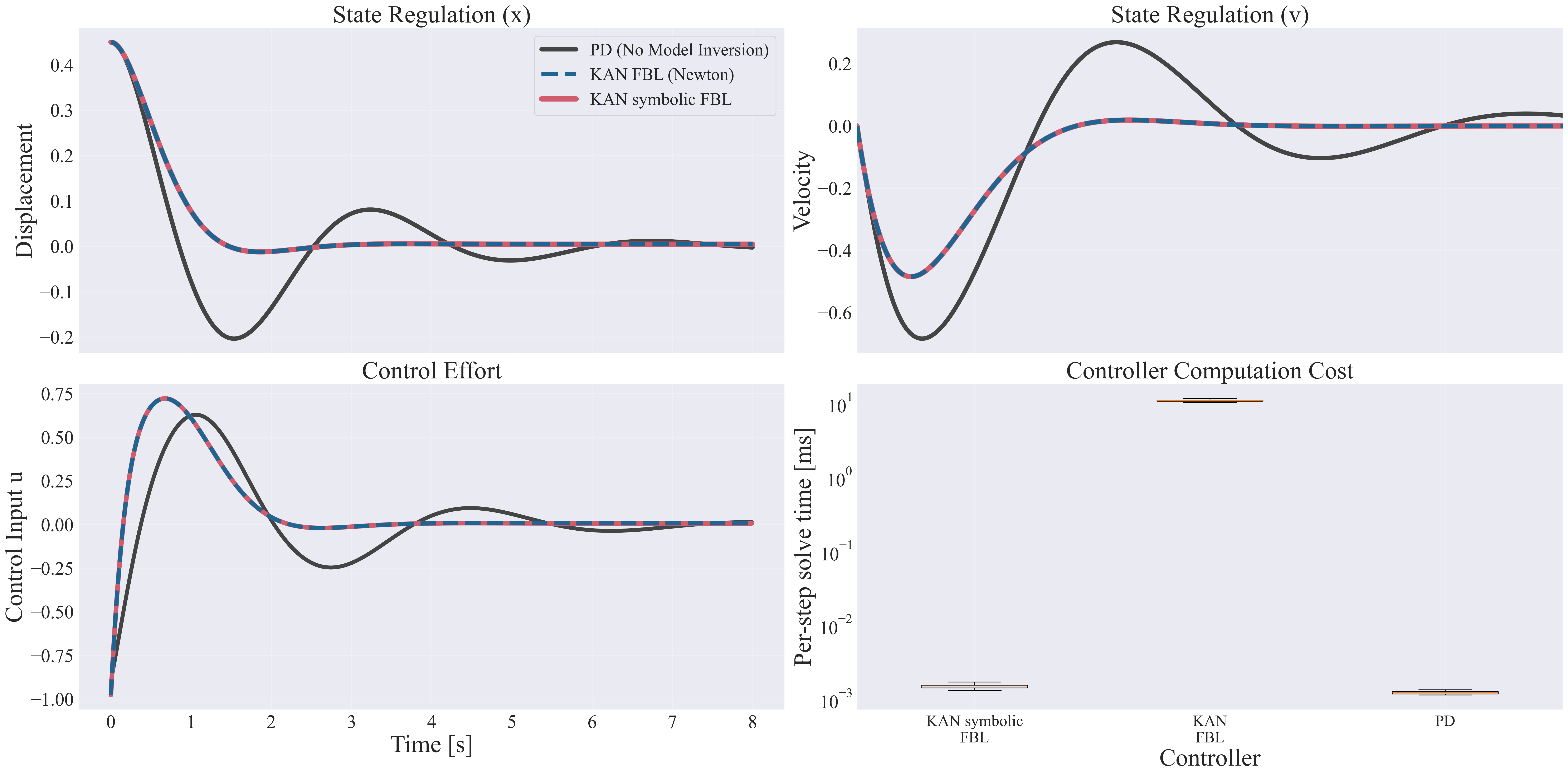}
\caption{Downstream control study on Duffing regulation, comparing PD (no inversion), feedback linearization using the KAN universal-approximator with Newton inversion (KAN FBL), and feedback linearization using the extracted closed-form KAN symbolic model (KAN symbolic FBL).
\textbf{Top}: state regulation trajectories for displacement $x$ (left) and velocity $v$ (right). 
\textbf{Bottom left}: control effort $u(t)$. 
\textbf{Bottom right}: box plot of per-step controller computation time (log scale). The symbolic feedback-linearization controller achieves regulation performance comparable to Newton-based inversion while substantially reducing online computation cost.}
\label{fig:duffing_control}
\end{figure}

{\color{black} To further substantiate that the symbolic representation obtained via the proposed scheme, $\text{KAN}_{\text{symbolic}}$, provides operational value beyond post-hoc interpretability, we introduce an additional downstream control study on the Duffing oscillator. The extracted closed-form governing dynamics are first rewritten in control-affine form and then directly embedded into a classical feedback-linearization (FBL) controller. 

The regulation task is evaluated under observation noise with standard deviation $\sigma = 0.005$, thereby reflecting realistic sensing conditions. Three control strategies are compared: 
(i) feedback linearization using the extracted closed-form symbolic model (KAN symbolic FBL), 
(ii) feedback linearization using the KAN universal approximator, which requires online Newton-based inversion at each time step (KAN FBL), and 
(iii) a baseline proportional–derivative (PD) controller.

This comparison allows us to isolate the practical advantage of the symbolic model: while preserving the nonlinear dynamics learned by the universal approximator, the explicit closed-form expression removes the need for iterative online inversion and enables direct analytical controller synthesis. As shown in Figure \ref{fig:duffing_control}, KAN symbolic FBL and KAN FBL achieve essentially identical regulation performance, attaining RMSE$_x=0.1068$, RMSE$_v=0.1441$, and settling time of $1.57$\,s for both, indicating that the symbolic form precisely preserves the dynamics learned by the universal approximator. Importantly, KAN symbolic FBL is substantially faster at inference time: the mean online solve time is $1.49\,\mu$s compared with $11022.25\,\mu$s for Newton inversion (approximately $7.4\times 10^3$ speedup), with p99 latency $1.75\,\mu$s versus $13511.57\,\mu$s. Compared with PD, KAN symbolic FBL also improves regulation quality (RMSE$_x$: $0.1068$ vs.\ $0.1207$; RMSE$_v$: $0.1441$ vs.\ $0.2285$; settling time: $1.57$\,s vs.\ $5.49$\,s). These results demonstrate that $\text{KAN}_{\text{symbolic}}$ provides a practical downstream benefit: it preserves control performance while dramatically reducing online computational cost, which is critical for real-time model-based control.}

Taken together, these results confirm that the superiority of SKANODE arises from both its structured inductive biases and symbolic discovery capability, which jointly enhance interpretability, predictive accuracy, and robustness compared to conventional neural ODE variants.}

\subsection{Van der Pol Oscillator}\label{sec:vdp}
The Van der Pol oscillator is another canonical nonlinear dynamical system characterized by nonlinear damping, distinct from the Duffing oscillator. Its governing equation is given by:
\begin{equation}\label{eq:vdp}
\ddot{x}(t) + \mu(1 - x^2)\dot{x}(t) + x(t) = u(t),
\end{equation}
where the damping parameter is set to \( \mu = -1 \). The system is excited by a sinusoidal input \( u(t) = \sin(\pi t) \). Data are generated at a sampling rate of 10 Hz over 60 seconds, with the first 20 seconds used for training and the remaining 40 seconds for testing.

\begin{figure}[h]
\centering
\includegraphics[width=1.0\linewidth]{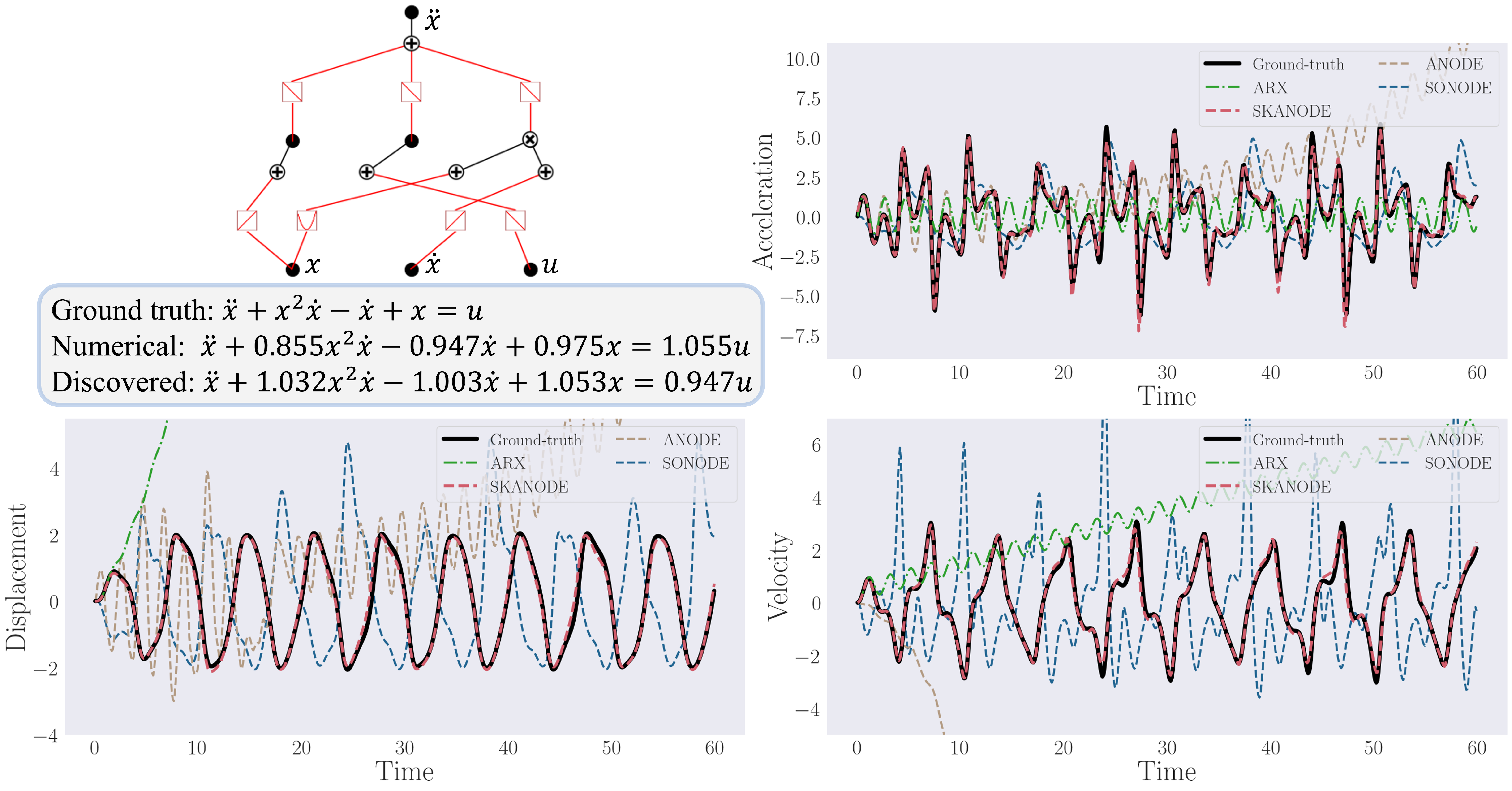}
\caption{Results on the Van der Pol oscillator. NARX trajectories are omitted for visual clarity. \textbf{Top left:} Symbolic governing equation discovered by $\text{KAN}_{\text{symbolic}}$, compared against the numerical baseline. The identified node receiving displacement input $x$ correctly captures the expected quadratic nonlinearity. \textbf{Top right:} Predicted system observables (accelerations) obtained using the SKANODE framework. SKANODE accurately reconstructs accelerations directly from the inferred latent dynamics, without the need for a separate observation model. \textbf{Bottom:} Inferred latent state variables compared to ground truth. The latent states recovered by SKANODE correspond closely to physically meaningful displacement and velocity trajectories, unlike ANODE and SONODE, whose latent states remain abstract representations without clear physical interpretation.}
\label{fig:vdp}
\end{figure}

\begin{table}[h]
\caption{Model performance on Van der Pol (VDP) acceleration prediction measured in MSE and SSIM. SKANODE outperforms both deep learning and numerical baselines.}
\centering
\begin{tabular}{ccc}
\hline
Model & MSE & SSIM \\
\hline
Numerical & 1.0247 & 0.6783 \\
ARX & 3.9180 & 0.0307 \\
NARX & 125.8891 & 0.0045 \\
ANODE & 21.6218 & 0.1612 \\
SONODE & 3.6513 & 0.1502 \\
\textbf{SKANODE} & \textbf{0.2317} & \textbf{0.8764} \\
\hline
\end{tabular}
\label{tab:VDP}
\end{table}

{\color{black} {\color{black}The performance of SKANODE, ANODE, and SONODE, as well as classical identification baselines ARX and NARX, is summarized in Table \ref{tab:VDP}}}. The symbolic governing equation extracted by \( \text{KAN}_{\text{symbolic}} \) within SKANODE is shown in Figure~\ref{fig:vdp}, where the identified expression accurately reflects the true system dynamics, outperforming the numerical baseline. The symbolic network clearly reveals a distinct quadratic node that receives the displacement input \( x \) and multiplies with the velocity input \( \dot{x} \), consistent with the nonlinear damping characteristic of the Van der Pol oscillator. The corresponding predicted accelerations and recovered latent states are also presented. SKANODE accurately reconstructs both the observable accelerations and the latent states, recovering physically meaningful displacement and velocity trajectories. In contrast, ANODE fails to capture the system dynamics entirely, producing large deviations even in reconstructing the measured accelerations. SONODE performs better than ANODE in predicting the observables but still exhibits notable discrepancies and does not recover physically interpretable latent states. Its latent representations remain abstract, as the model lacks explicit physical structure. These results highlight the advantage of SKANODE in simultaneously achieving high predictive accuracy and interpretable latent dynamics through its structured state-space formulation and symbolic discovery capability.

\subsection{F-16 Aircraft}
\begin{figure}[h]
     \centering
     \begin{subfigure}[b]{0.59\textwidth}
         \centering
         \includegraphics[height=0.3\textwidth]{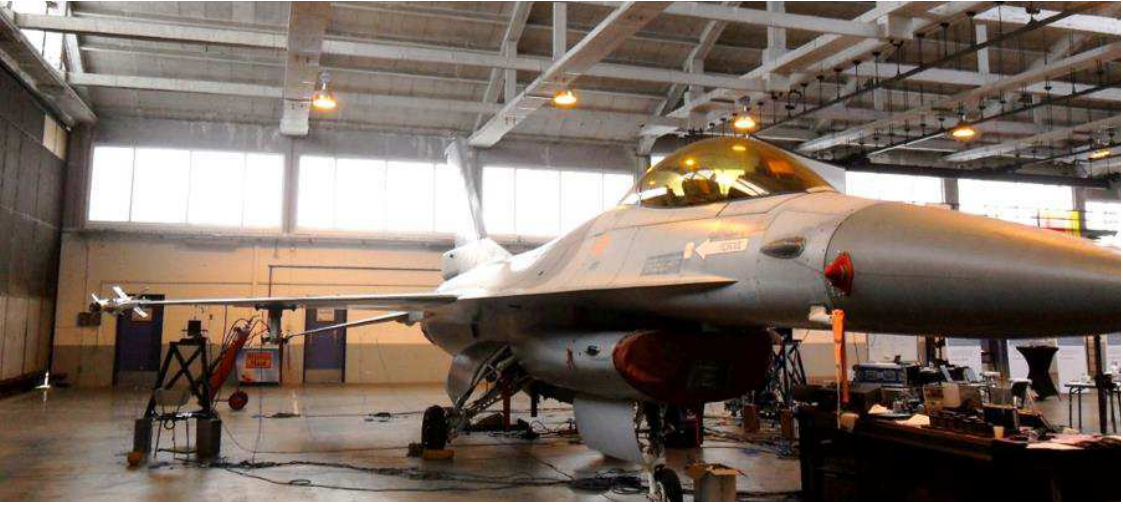}
         \caption{Complete aircraft structure}
         \label{fig:F16_plane}
     \end{subfigure}
     \hfill
     \begin{subfigure}[b]{0.4\textwidth}
         \centering
         \includegraphics[height=0.4\textwidth]{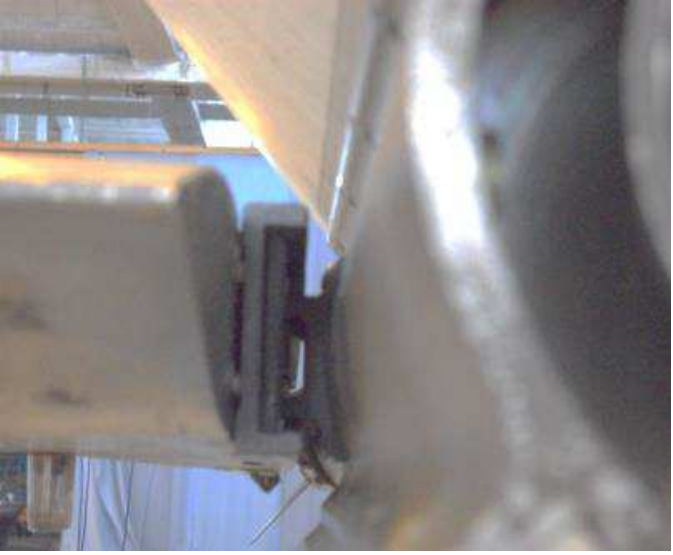}
         \caption{Back connection of the right-wing-to-payload mounting interface}
         \label{fig:F16_interface}
     \end{subfigure}
        \caption{Overview of F16 aircraft and sensor location.}
        \label{fig:F16_photos}
\end{figure}

The last example expands to a real-world complex system. The study presented by \cite{noel2017f} details the experimental data collected from a full-scale F-16 aircraft during a ground vibration test master class. {\color{black} The dataset is publicly available at \url{https://data.4tu.nl/articles/_/12954911}.} To simulate real-world conditions, two dummy payloads were mounted on the wing tips of the aircraft. The setup includes 145 acceleration sensors distributed across the aircraft, under excitation of a shaker installed beneath the right wing to apply sine-sweep excitations over a frequency range of 2 to 15 Hz. Data from three specific sensors were made public: one at the excitation point, one on the right wing near the nonlinear interface of interest, and one on the payload adjacent to the same interface. These measurements were taken at a sampling frequency of 400 Hz. {\color{black}In this work, the highest excitation level in this public dataset (95.6~N input amplitude) is considered, under which the system nonlinearity is most pronounced.} The interfaces, consisting of T-shaped connectors on the payload side, slid through a rail attached to the wing side, were identified by preliminary investigation as primary sources of nonlinearity in the aircraft's structural dynamics, particularly at the right-wing-to-payload interface.

In our setting, the acceleration measured at the excitation point serves as the input reference for the model, while the acceleration measured on the right wing near the nonlinear interface is used as the output signal. Data from the first 2.5 seconds of measurement were used for training, with the subsequent 10 seconds used for testing.

\begin{figure}[h]
\centering
\includegraphics[width=1.0\linewidth]{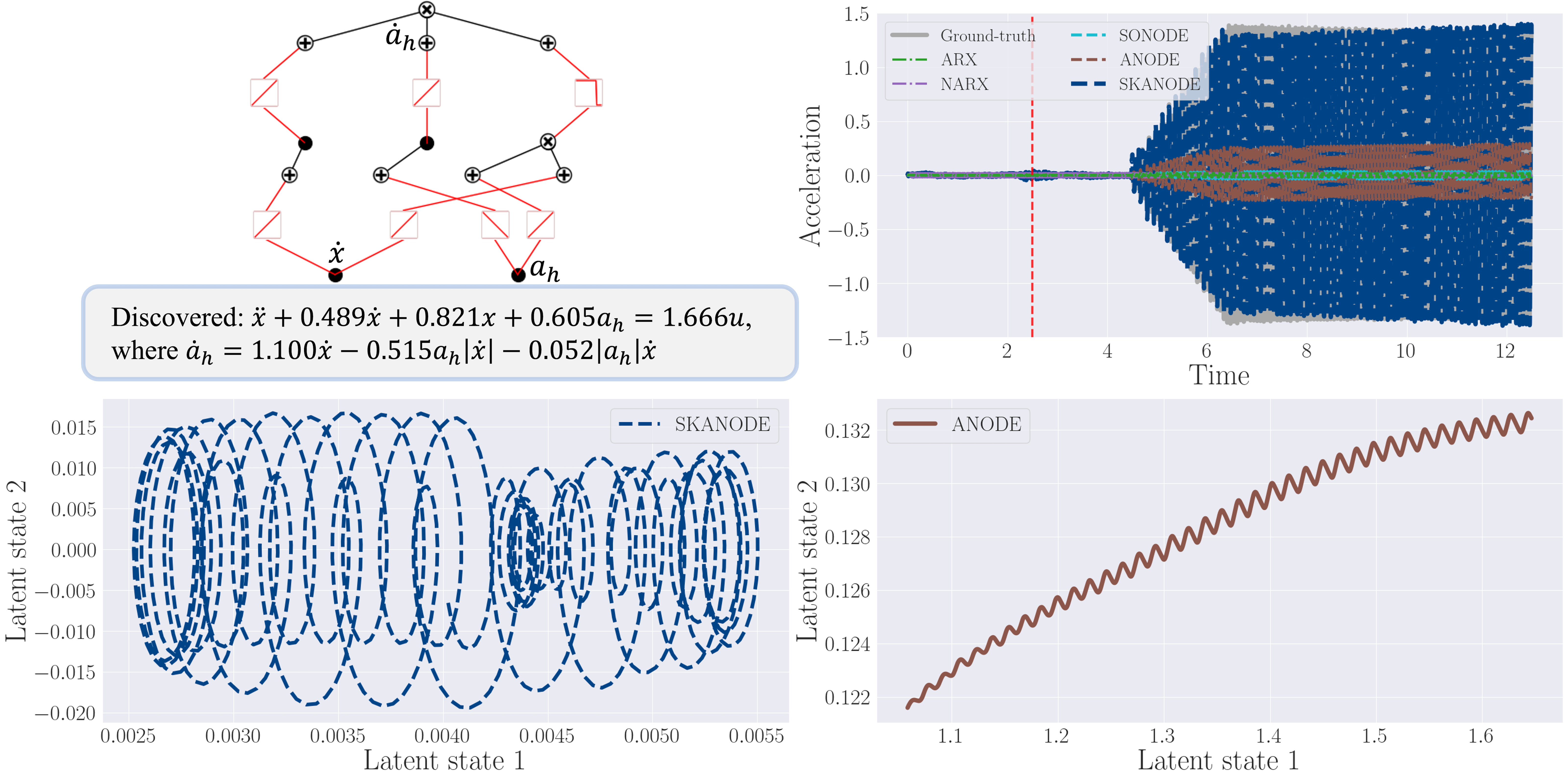}
\caption{Results on the F-16 aircraft. \textbf{Top left:} Symbolic governing equation discovered by \( \text{KAN}_{\text{symbolic}} \). \textbf{Top right:} Predicted system observables (accelerations), with the red vertical line marking the training horizon. \textbf{Bottom:} Inferred latent states visualized in phase portraits. The phase plot obtained by SKANODE exhibits distinct closed-loop patterns characteristic of hysteretic behavior, whereas the phase plot from ANODE yields abstract latent trajectories that lack clear physical interpretation.}
\label{fig:F16}
\end{figure}

\begin{figure}[h]
\centering
\includegraphics[width=1.0\linewidth]{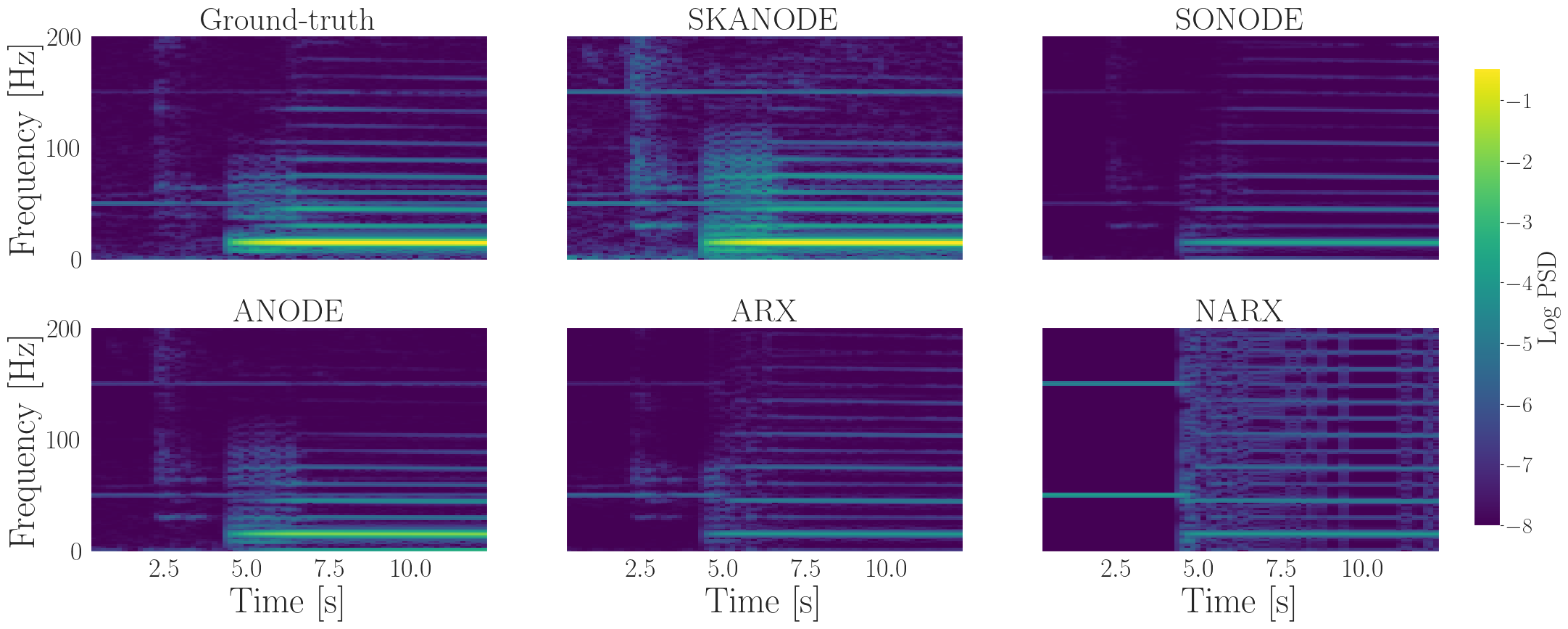}
\caption{Time-frequency comparison of the F-16 acceleration response using spectrograms. SKANODE more faithfully reproduces the dominant frequency bands and their temporal evolution.}
\label{fig:F16_spec}
\end{figure}

The predicted vibration responses are shown in Figure \ref{fig:F16}. {\color{black}Trained only on a short initial segment of the response with substantially smaller amplitude than the subsequent prediction horizon, SKANODE yields a predicted trajectory that matches the measured response significantly better than both the numerical baseline and ANODE.} Furthermore, the phase portraits of the inferred latent states reveal that SKANODE produces a much more structured and physically meaningful phase space, exhibiting characteristic loop patterns indicative of hysteretic behavior. This observation is consistent with the findings of \cite{dossogne2015nonlinear}, confirming the presence of nonlinear hysteresis at the interface. Such interpretability offers important practical value: identifying hysteretic signatures in the dynamics can provide insights into localized energy dissipation mechanisms and evolving structural degradation. This information may support engineers in diagnosing potential fatigue-related issues, monitoring wear at critical joints, and informing predictive maintenance strategies in complex aerospace structures.

{\color{black}In addition, a spectrogram comparison is provided to assess whether the learned models preserve the time-frequency content of the nonlinear response. As shown in Figure~\ref{fig:F16_spec}, SKANODE better preserves the dominant frequency ridges and their time-varying evolution, whereas the baselines exhibit noticeable frequency smearing and mismatched band intensities.}

{\color{black}Building on these observations, SKANODE is extended by introducing an additional latent hysteretic state \( a_h \), leading to the following structured formulation:}
\[
\dot{\mathbf{z}}=
\begin{bmatrix}
\dot{x} \\ \dot{v} \\ \dot{a}_h
\end{bmatrix}=
\begin{bmatrix}
v \\
-kx-\alpha a_h-cv+\frac{1}{m}u \\
h_\theta(v, a_h)
\end{bmatrix},
\]
where \( h_\theta \) captures the hysteretic dynamics and \( \alpha \in [0,1] \) weights the relative contributions of elastic stiffness and hysteretic components.

Through this extended structured state-space design, SKANODE is able to identify a symbolic governing equation, as presented in Figure \ref{fig:F16}. The resulting equation again aligns with the findings of \cite{dossogne2015nonlinear}, confirming that under low-level excitation, the system behavior is dominated by linear stiffness with mild hysteretic effects.

{\color{black} In comparison, the numerical baseline using SINDy fails to identify an equation incorporating the internal hysteretic mechanism. This is because it directly integrates the acceleration signals and seeks an explicit symbolic relation between acceleration, displacement, and velocity, without modeling any latent hysteresis effects. The identified equation from the numerical baseline becomes unstable and diverges over the prediction horizon, as the identified expression does not capture the true underlying dynamics of the system.}

\begin{table}[h]
\caption{Model performance on F-16 acceleration prediction measured in MSE and SSIM. SKANODE outperforms both deep learning and numerical baselines.}
\centering
\begin{tabular}{ccc}
\hline
Model & MSE & SSIM \\
\hline
Numerical & NaN & NaN \\
ARX & 0.5131 & 0.3597 \\
NARX & 0.5085 & 0.3409 \\
ANODE & 0.7139 & 0.3987 \\
SONODE & 0.5324 & 0.3568 \\
\textbf{SKANODE} & \textbf{0.0022} & \textbf{0.9576} \\
\hline
\end{tabular}
\label{tab:F16}
\end{table}

\begin{figure}[h]
\centering
\includegraphics[width=0.55\linewidth]{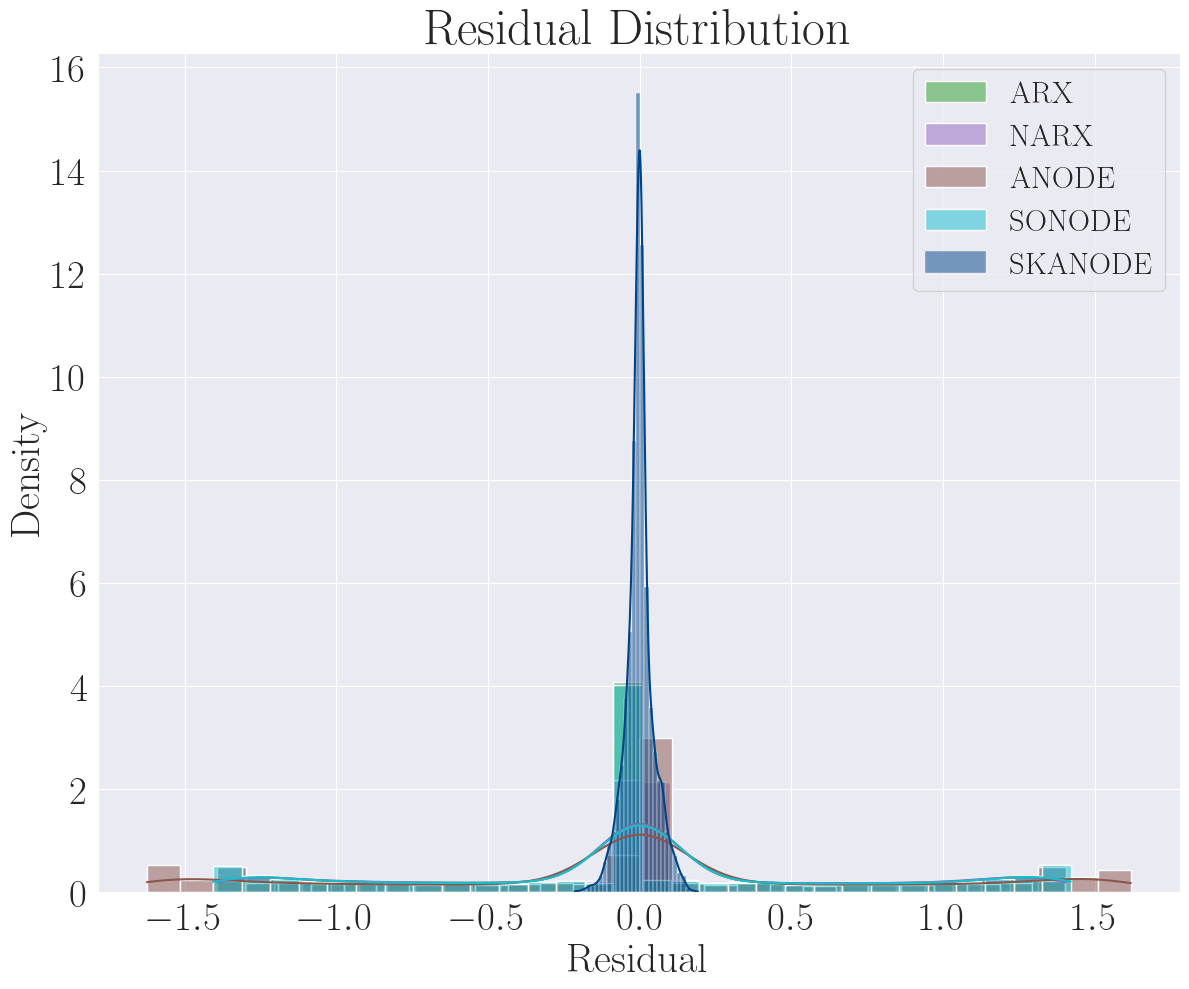}
\caption{Prediction error distribution for the F-16 acceleration response. SKANODE errors are more tightly clustered around zero compared with the deep learning and numerical baselines, indicating higher accuracy and more reliable predictions.}
\label{fig:F16_res}
\end{figure}

{\color{black} We further assess the model performance using both MSE and SSIM metrics. As reported in Table \ref{tab:F16}, the proposed SKANODE consistently outperforms the deep learning as well as the numerical baselines in predicting accelerations. The distribution of prediction errors is revealed in Figure \ref{fig:F16_res}, where SKANODE exhibits errors that are more tightly clustered around zero. This indicates that SKANODE achieves higher accuracy and provides more reliable predictions. Overall, this example underscores the capability of SKANODE to effectively capture complex, real-world nonlinear dynamics through the structured latent state design and interpretable symbolic equation discovery.}

\section{Conclusion}

{\color{black} In this work, a novel framework, termed \textit{Structured Kolmogorov–Arnold Neural ODEs (SKANODE)}, is proposed, which integrates structured state-space modeling with symbolic equation discovery for learning interpretable dynamics from partially observed nonlinear systems.} By embedding a physically meaningful latent state structure and leveraging the Kolmogorov–Arnold Network (KAN) as both a universal function approximator and symbolic learner, SKANODE bridges the gap between the expressive capacity of deep learning and the interpretability required for scientific modeling. This approach enables the direct recovery of physically meaningful latent states, such as displacements and velocities, while simultaneously extracting explicit symbolic governing equations directly from indirect measurements like accelerations.

{\color{black} Comprehensive experiments on multiple nonlinear dynamical systems demonstrate that SKANODE outperforms existing NODE variants as well as classical system identification baselines in both predictive accuracy and interpretability. Specifically, SKANODE recovers equation-level descriptions that match the expected nonlinear mechanisms across all considered systems. For the Duffing oscillator, the discovered symbolic form captures the characteristic cubic stiffness nonlinearity, while the structured latent coordinates evolve as physically meaningful displacement and velocity. For the Van der Pol oscillator, the extracted symbolic structure reflects the nonlinear damping mechanism through the expected state-dependent coupling. For the real-world F-16 ground vibration dataset, the inferred latent phase portraits exhibit closed-loop hysteretic patterns, and the learned symbolic model captures an associated internal hysteretic mechanism via the additional latent hysteretic state. In all three cases, the proposed method yields more accurate and robust predictions than black-box NODE baselines and classical ARX/NARX identification.} These results highlight the importance of combining inductive structural biases with symbolic discovery capabilities to guide the learning process toward solutions that are physically consistent, robust, and interpretable.


The ability to uncover governing equations from indirect observations makes SKANODE particularly well-suited for scientific and engineering domains where full-state measurements are often impractical, and model transparency is critical for trustworthy decision-making. {\color{black} In terms of applicability, the proposed framework is intended for nonlinear dynamical systems whose dominant behavior can be reasonably captured by a smooth ODE that admits a structured state-space representation (e.g., second-order dynamics with displacement and velocity states, or modest extensions with additional latent internal variables). This class includes a wide range of engineering systems in structural dynamics, vibration mitigation, aeroelasticity, and electromechanical devices. Effective use further relies on a measurement model compatible with the imposed structure (e.g., acceleration measurements consistent with second-order dynamics), on sufficient observability of the structured latent states from the available measurements, and on sufficiently informative excitation and, where relevant, known external inputs. When the above conditions are violated---for example in systems dominated by non-smooth switching or impacts, strong sensing distortions such as bias, drift, or colored noise, unmeasured forcing, or weak excitation---the recovered latent states and symbolic expressions may lose physical meaning or become non-unique. In such cases, additional domain-specific constraints, hybrid formulations, or more flexible observation models may be required.

Future work will explore the integration of stronger domain-specific priors and constraints within the symbolic extraction process, as well as extensions that improve robustness to realistic sensing imperfections. One promising extension is to incorporate a partially decoupled observation model trained jointly with the dynamics to account for sensing distortions. Such modeling flexibility would allow the framework to adapt to more challenging real-world sensing environments.}


\section*{Acknowledgement}
The research was conducted at the Future Resilient Systems at the Singapore-ETH Centre, which was established collaboratively between ETH Zurich and the National Research Foundation Singapore. This research is supported by the National Research Foundation Singapore (NRF) under its Campus for Research Excellence and Technological Enterprise (CREATE) programme.

\newpage
\bibliographystyle{elsarticle-num} 
\bibliography{my_references}

\newpage
\appendix
\setcounter{prop}{0}

\section{SKANODE Recovers True System Dynamics}\label{app:prop1}

In this section, {\color{black}it is theoretically demonstrated that, under certain conditions and assuming idealized loss minimization in the theoretical setting}, the proposed Structured Kolmogorov–Arnold Neural ODE (SKANODE) framework is capable of exactly recovering the true governing dynamics.

\begin{prop}[Identifiability of SKANODE]
Consider a true second-order dynamical system of the form:
\[
\ddot{x}(t) = h^{\ast}(x(t), \dot{x}(t), u(t)),
\]
where \( h^{\ast} \) denotes the true governing function. Assume:

\begin{itemize}
    \item[(i)] The measured observable is acceleration: \( y(t) = \ddot{x}(t) \).
    \item[(ii)] The structured state-space model given by Eq.~\eqref{eq:ssm_dynamics} and observation model Eq.~\eqref{eq:ssm_obs} are adopted.
    \item[(iii)] The function space \( \mathcal{H} \) consists of functions \( h_\theta \) parameterized by the neural network model, and the true function satisfies \( h^{\ast} \in \mathcal{H} \).
    \item[(iv)] The mappings \( t \mapsto y(t) \) and \( t \mapsto \hat{y}(t) \) both belong to a finite-dimensional function class \( \mathcal{F} \) of dimension \( d_{\mathcal{F}} \).
    \item[(v)] The training data contains \( N > d_{\mathcal{F}} \) distinct time samples \( t_0, \dots, t_N \).
\end{itemize}

Then, if minimizing the loss function
\begin{equation}
\mathcal{L}(\theta) = \sum_{i=0}^{N} \left\| h_\theta\left( \hat{x}(t_i), \hat{v}(t_i), u(t_i) \right) - y(t_i) \right\|^2
\end{equation}
yields zero loss, i.e., \( \mathcal{L}(\theta) = 0 \), it follows that
\[
h_\theta(\hat{x}(t), \hat{v}(t), u(t)) \equiv h^{\ast}(x(t), v(t), u(t)),
\]
and the estimated latent states \( \hat{x}(t), \hat{v}(t) \) coincide with the true states \( x(t), v(t) \) up to numerical integration accuracy.
\end{prop}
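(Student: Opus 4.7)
The plan is to reduce Proposition 1 to two elementary facts: (a) a function in a finite-dimensional class is determined by sufficiently many point values, and (b) first-order ODEs driven by a matched forcing term, with matched initial conditions, have unique solutions. I would therefore proceed in three steps, moving from the discrete training loss to the continuous trajectory, then to the latent states, and finally to the governing function.

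\emph{Step 1 (from samples to trajectories).} Zero loss means $\hat{y}(t_i) = y(t_i)$ at $N+1$ distinct sample points, with $N+1 > d_{\mathcal{F}}$. By assumption (iv) both $y$ and $\hat{y}$ lie in $\mathcal{F}$, so their difference lies in a space of dimension at most $d_{\mathcal{F}}$. Under the standard assumption that evaluation functionals at distinct points are linearly independent on $\mathcal{F}$ (a Haar/Chebyshev-type property), a nonzero element cannot vanish at more than $d_{\mathcal{F}}$ points, forcing $\hat{y}(t) \equiv y(t) = \ddot{x}(t)$ over the training interval.

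\emph{Step 2 (state recovery).} Assumption (ii) enforces $\dot{\hat{x}} = \hat{v}$ and $\dot{\hat{v}} = h_\theta(\hat{x},\hat{v},u) = \hat{y}$. Combined with Step 1 this gives $\dot{\hat{v}}(t) = \ddot{x}(t) = \dot{v}(t)$. Assuming the initial condition $\hat{v}(t_0) = v(t_0)$, which is an implicit part of the Neural ODE setup (either prescribed or jointly optimized to attain zero loss), integration yields $\hat{v}(t) \equiv v(t)$. Then $\dot{\hat{x}} = \hat{v} = v = \dot{x}$ together with $\hat{x}(t_0) = x(t_0)$ gives $\hat{x}(t) \equiv x(t)$. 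Both equalities hold only up to the error of the numerical integrator, which is precisely the qualifier stated in the proposition. With the latent states recovered, substituting back into the observation equation yields
\[
h_\theta(\hat{x}(t),\hat{v}(t),u(t)) = \hat{y}(t) = y(t) = h^{\ast}(x(t),v(t),u(t)),
\]
which is the claimed trajectory-wise identity; assumption (iii) ($h^{\ast}\in\mathcal{H}$) guarantees that this identity is achievable within the hypothesis class.

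\emph{Main obstacle.} The integration argument in Step 2 and the substitution in the final identity are essentially routine once Step 1 is in place. The delicate step is Step 1: promoting pointwise agreement at finitely many samples to identity as elements of $\mathcal{F}$ hinges on the unimodality of evaluation functionals on $\mathcal{F}$, which needs either to be added explicitly or justified for the concrete function classes induced by solving the structured ODE with smooth $h_\theta$ and smooth $u$ (e.g., real-analytic or band-limited classes). A secondary subtlety, worth flagging in the statement itself, is the initial-condition matching: $\mathcal{L}(\theta)=0$ alone does not force $\hat{z}(t_0)=z(t_0)$ unless the initial state is either observed, prescribed, or trained as part of $\theta$; this assumption should be made explicit when invoking uniqueness of the ODE solution.
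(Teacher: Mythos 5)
Your proposal follows essentially the same route as the paper's own proof: zero loss plus the finite-dimensionality of \( \mathcal{F} \) promotes agreement at the \( N \) samples to \( \hat{y}(t) \equiv y(t) \), after which the matched derivatives \( \dot{\hat{v}} = \hat{y} = y = \dot{v} \), \( \dot{\hat{x}} = \hat{v} \), together with matched initial conditions and uniqueness of the ODE solution (the paper invokes Picard--Lindel\"of; you integrate directly), give state recovery and the trajectory-wise identity \( h_\theta(\hat{x},\hat{v},u) \equiv h^{\ast}(x,v,u) \). The two caveats you flag — that the sample-to-trajectory step needs a Haar/unisolvence-type property of \( \mathcal{F} \) rather than "dimension" alone, and that \( \hat{x}(t_0)=x(t_0) \), \( \hat{v}(t_0)=v(t_0) \) is an extra assumption not implied by zero loss — are both glossed over in the paper (which asserts the former "by definition of \( \mathcal{F} \)" and the latter without comment), so your version is, if anything, the more careful rendering of the same argument.
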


\begin{proof}
The true system evolves according to:
\[
\frac{d}{dt}
\begin{bmatrix} x(t) \\ v(t) \end{bmatrix}
= 
\begin{bmatrix} v(t) \\ h^{\ast}(x(t), v(t), u(t)) \end{bmatrix}, \quad y(t) = h^{\ast}(x(t), v(t), u(t)).
\]

The SKANODE model estimates:
\[
\frac{d}{dt}
\begin{bmatrix} \hat{x}(t) \\ \hat{v}(t) \end{bmatrix}
= 
\begin{bmatrix} \hat{v}(t) \\ h_\theta(\hat{x}(t), \hat{v}(t), u(t)) \end{bmatrix}, \quad \hat{y}(t) = h_\theta(\hat{x}(t), \hat{v}(t), u(t)).
\]

By minimizing the loss to zero, we have \( \hat{y}(t_i) = y(t_i) \) at all training points \( t_i \). Since both \( y(t) \) and \( \hat{y}(t) \) belong to the finite-dimensional function class \( \mathcal{F} \), and we have \( N > d_{\mathcal{F}} \) distinct sampled time points, it follows directly by the definition of \( \mathcal{F} \) that:
\[
\hat{y}(t) \equiv y(t), \quad \forall t.
\]
Therefore, we have
\[
h_\theta(\hat{x}(t), \hat{v}(t), u(t)) = h^{\ast}(x(t), v(t), u(t)) \quad \forall t.
\]

Both systems now evolve under identical ODEs:
\[
\frac{d}{dt} \begin{bmatrix} \hat{x}(t) \\ \hat{v}(t) \end{bmatrix} = \frac{d}{dt} \begin{bmatrix} x(t) \\ v(t) \end{bmatrix},
\]
with identical initial conditions \( \hat{x}(t_0) = x(t_0) \), \( \hat{v}(t_0) = v(t_0) \). By uniqueness of solutions to ODE initial value problems (Picard–Lindelöf theorem), it follows that:
\[
\hat{x}(t) \equiv x(t), \quad \hat{v}(t) \equiv v(t), \quad \forall t.
\]
Thus, both the governing function and the latent states are exactly recovered.
\end{proof}

\end{document}